\newcommand{\vertices}{V}
\newcommand{\edges}{E}
\newcommand{\temporalEdges}{\edges^\tau}
\newcommand{\graph}{G}
\newcommand{\temporalGraph}{\graph^\tau}
\newcommand{\staticEdges}{\edges^\text{s}}
\newcommand{\aggrGraph}{\graph^\text{a}}
\newcommand{\concGraph}{\graph^\text{c}}
\newcommand{\eventGraph}{G^\eventEdges}
\newcommand{\eventEdges}{\mathcal{E}}
\newcommand{\augmentedGraph}{\graph^\text{aug}}
\newcommand{\augmentedVertices}{\vertices^\text{aug}}
\newcommand{\augmentedEdges}{\edges^\text{aug}}
\newcommand{\inEdges}{\edges^\text{in}}
\newcommand{\outEdges}{\edges^\text{out}}
\newcommand{\vertex}{v}
\newcommand{\vertexA}{u}
\newcommand{\vertexB}{v}
\newcommand{\edgeA}{e}
\newcommand{\neighbors}{N}
\newcommand{\inNeighbors}{\neighbors_I}
\newcommand{\outNeighbors}{\neighbors_O}
\newcommand{\mapping}{\pi}
\newcommand{\vertexMapping}{\pi_\vertices}
\newcommand{\edgeMapping}{\pi_\edges}
\newcommand{\paths}{P}
\newcommand{\aPath}{p}
\newcommand{\aLabel}{\ell}
\newcommand{\nodeLabel}{\aLabel_\vertices}
\newcommand{\edgeLabel}{\aLabel_\edges}
\newcommand{\nodeLabels}{\mathcal{L}_\vertices}
\newcommand{\edgeLabels}{\mathcal{L}_\edges}
\newcommand{\aggrLabel}{\aLabel^\text{a}}
\newcommand{\concLabel}{\aLabel^\text{c}}
\newcommand{\multiset}[1]{\{\!\!\{#1\}\!\!\}}
\DeclareMathOperator{\enc}{enc}
\DeclareMathOperator{\hash}{hash}
\newcommand{\minTime}{t_\text{min}}
\newcommand{\timestamps}{T}
\newcommand{\aggrIn}[1]{\overrightarrow{f}^{(#1)}_\text{agg}}
\newcommand{\aggrOut}[1]{\overleftarrow{f}^{(#1)}_\text{agg}}
\newcommand{\com}[1]{f^{(#1)}_\text{com}}
\newcommand{\temporalPaths}{\paths^\tau}
\tikzstyle{node}=[circle,draw=black, inner sep=5pt]
\newcommand{\picWidth}{0.8}
\newcommand{\picHeight}{0.8}
\newtheorem{definition}{Definition}
\newtheorem{theorem}{Theorem}
\newtheorem*{theorem*}{Theorem}
\newtheorem{observation}{Observation}
\title{Weisfeiler and Leman Follow the Arrow of Time: Expressive Power of Message Passing in Temporal Event Graphs}
\author{
  Franziska Heeg \\
  Chair of Machine Learning for Complex Networks \\
  Center for AI and Data Science (CAIDAS) \\
  University of Würzburg \\
  Würzburg, Germany \\
  \texttt{franziska.heeg@uni-wuerzburg.de} \\
  \And
  Jonas Sauer \\
  Karlsruhe Institute of Technology \\
  University of Karlsruhe \\
  Karlsruhe, Germany \\
  \texttt{jonas.sauer@kit.edu} \\
  \And
  Petra Mutzel \\
  Chair of Computational Analytics \\
  Institute for Computer Science 1 \\
  University of Bonn \\
  Bonn, Germany \\
  \texttt{pmutzel@uni-bonn.de} \\
  \And
  Ingo Scholtes \\
  Chair of Machine Learning for Complex Networks \\
  Center for AI and Data Science (CAIDAS) \\
  University of Würzburg \\
  Würzburg, Germany \\
  \texttt{ingo.scholtes@uni-wuerzburg.de} \\
}
\begin{document}

\maketitle

\begin{abstract}
An important characteristic of temporal graphs is how the directed arrow of time influences their \emph{causal topology}, i.e., which nodes can possibly influence each other causally via time-respecting paths.
The resulting patterns are often neglected by temporal graph neural networks (TGNNs).
To formally analyze the expressive power of TGNNs, we lack a generalization of graph isomorphism to temporal graphs that captures their causal topology.
Addressing this gap, we introduce \emph{consistent event graph isomorphism}, which utilizes a time-unfolded representation of time-respecting paths in temporal graphs.
We compare this definition with existing notions of temporal graph isomorphisms.
We highlight the advantages of our approach and develop a temporal generalization of the Weisfeiler-Leman algorithm as a one-sided test to distinguish non-isomorphic temporal graphs.
Building on this theoretical foundation, we derive a novel message passing scheme for TGNNs that operates on the event graph representation of temporal graphs and experimentally evaluate it in a temporal graph classification setting.
\end{abstract}

\section{Motivation}
\label{sec:motivation}

Graph neural networks (GNNs) are a cornerstone of deep learning in relational data.
They recently were generalized to temporal GNNs (TGNNs) that capture patterns in time series data on \emph{temporal graphs}, with time-stamped edges.

Such temporal graphs can be categorized into two types: In discrete-time temporal graphs (DTTGs), edges carry coarse-grained timestamps and can naturally be represented as sequences of static snapshots, lending themselves to generalizations of static graph learning.
In contrast, in \emph{continuous-time temporal graphs} edges carry high-resolution, possibly unique timestamps.
Hence snapshot graphs are very sparse, which requires (i) a coarse-graining of time that destroys temporal information, or (ii) learning techniques able to fully utilize it.

To address end-to-end learning tasks in temporal graphs, several TGNN architectures were proposed, which capture different patterns.
Examples include the evolution of node embeddings in consecutive snapshots for discrete-time temporal graphs, or temporal edge activation patterns in continuous-time temporal graphs \citep{longa2023graphneuralnetworkstemporal}.
An important additional characteristic of temporal graphs is how the directed \emph{arrow of time} influences which nodes can possibly \emph{causally} influence each other via time-respecting paths.
As example, consider a temporal graph with two edges connecting \emph{Alice} to \emph{Bob} at timestamp $t$ and \emph{Bob} to \emph{Carol} at timestamp $t'$.
If~$t<t'$, Alice can possibly causally influence Carol via Bob.
Conversely, if~$t'<t$, a causal influence from Alice to Carol is impossible because it would propagate backwards in time.
To prevent wrong interpretations of the term \emph{causal} in our work, we stress that the correct temporal order of edges is a \emph{necessary} but not \emph{sufficient} condition for causal influence.

Hence, considering the arrow of time (and thus which temporal ordering of events cannot mediate causal influence) is an important precondition that enables causality-aware learning in temporal graphs.

Numerous works in network science studied how the temporal ordering of edges in temporal graphs influences connectivity, dynamical processes like spreading or diffusion, node centralities, cluster patterns, or controllability \citep{lentz2013unfolding,rosvall2014memory,Scholtes2014_natcomm,Scholtes2017,badie2022directed,Pfitzner2013_prl}.
These patterns are often neglected by TGNNs, which can limit their performance in high-resolution time series data on temporal graphs.

To formally analyze this aspect in existing TGNNs, in line with works on the expressivity of (static) GNNs \citep{xu2019powerfulgraphneuralnetworks,Morris2019}, we lack a generalization of graph isomorphism to temporal graphs that captures how the arrow of time shapes their ``causal topology'' \cite{Scholtes2014_natcomm}.
This could inform the development of \emph{causality-preserving} message passing schemes for TGNNs with provable expressive power.
Addressing this gap, our contributions are:

{\setdefaultleftmargin{1.5em}{}{}{}{}{}
\begin{compactitem}
    \item We propose a new temporal generalization of graph isomorphism called \emph{time-respecting path isomorphism}, which focuses on the preservation of time-respecting paths between temporal graphs.
    While we can also apply it to snapshot-based temporal graphs, our definition is particularly suitable for temporal graphs with high-resolution, possibly unique, timestamps.
    \item Our definition preserves temporal reachability: if there is a time-respecting
    path from $u$ to $v$ in $G_1$, there must also be one between the corresponding
    nodes in any isomorphic $G_2$. Time-respecting paths impose a partial ordering
    on timestamped edges---if a path passes $e$ before $e'$, then $e$ must precede
    $e'$; otherwise their relative order is irrelevant. Two temporal graphs can
    thus be isomorphic even if the total orderings of their timestamped edges
    differ, as long as this partial ordering is preserved.

    \item We contrast our definition with the recently proposed \emph{timewise isomorphism}, which generalizes graph isomorphism to snapshots of discrete-time temporal graphs~\citep{walkega2024expressive}.

     We show that timewise isomorphism is stricter because it also considers the timestamp values. As a consequence, two temporal graphs with identical time-respecting paths can be timewise non-isomorphic.
    \item We show that time-respecting path isomorphism is equivalent to static graph isomorphism on the \emph{augmented event graph}, a newly proposed auxiliary graph that (i) captures time-respecting paths in the temporal graph through a static line graph expansion, and (ii) is augmented by the original nodes in the temporal graph. This allows us to generalize the Weisfeiler-Leman (WL) algorithm, viewed as a one-sided test for distinguishing non-isomorphic graphs, to the temporal setting.
    \item We use our insights to derive a novel message passing scheme operating on the augmented event graph, which generates representations that allow to distinguish non-isomorphic temporal graphs. We show that it has the same expressive power as the WL test on the augmented event graph. We experimentally evaluate the resulting TGNN architecture in a temporal graph classification task with synthetic and real datasets. Compared to TGNNs building on timewise isomorphism, we observe that our model is better able distinguish between differences in temporal activation patterns that affect time-respecting paths, and those that do not.
\end{compactitem}}

Our work contributes to the theoretical foundation of temporal graph learning, providing a basis to develop causality-aware message passing architectures for temporal graphs.

\section{Related Work}
\label{sec:related}

Over the past years, numerous works introduced various generalizations of GNNs for temporal data.
Following the taxonomy given by \citet{longa2023graphneuralnetworkstemporal}, these works can be broadly categorized into snapshot- and event-based models.
\emph{Snapshot-based models} operate on data with low temporal resolution that provide a sequence of static graphs.
In contrast, \emph{event-based models} operate on high-resolution time series data that capture individual events like the addition or removal of nodes or edges.
Here we briefly summarize architectures for these different approaches.

Snapshot-based models such as \texttt{ROLAND} ~\citep{you2022roland} and
EvolveGCN ~\citep{pareja2020evolvegcn} apply RNN-based approaches to sequences of static graph snapshots.
Taking an event-based perspective, the temporal graph network (TGN) architecture~\citep{rossi2020temporalgraphnetworksdeep} integrates  embedding and memory modules to capture multi-faceted patterns in sequences of timestamped edges.
TGAT ~\citep{xu2020inductive} extends the GAT attention mechanism~\citep{velivckovic2017graph} to obtain temporal  encodings based on the changing neighborhood of nodes in temporal graphs.
\citet{wang2021inductive} introduces Causal Anonymous Walks (CAWs), which are temporal random walks anonymized by node hitting counts, enabling inductive representation learning of temporal networks by capturing causal motifs without relying on node identities or rich edge attributes.

None of these methods explicitly model how the \emph{arrow of time} influences time-respecting paths.
A number of works in network science investigated this aspect in temporal graphs \citep{holme2015modern}.
Several works~\citep{lentz2013unfolding,Pfitzner2013_prl,OettershagenKMM20,OettershagenM22,badie2022directed} consider how correlations in the temporal ordering of edges influence connected components, epidemic spreading, and percolation in temporal graphs.
~\citet{rosvall2014memory} study how the temporal ordering of nodes along time-respecting flows influence spreading processes, node centralities and cluster patterns.
~\citet{Scholtes2014_natcomm} use a higher-order Laplacian to analytically predict how the temporal ordering of edges speeds up or slows down diffusion.
Building on this idea, ~\citet{Scholtes2017} use higher-order De Bruijn graphs to model time-respecting paths in temporal graphs.
\citet{qarkaxhija2022bruijn} generalize neural message passing to higher-order De Bruijn graphs, obtaining a TGNN that models how the arrow of time influences time-respecting paths.
~\citet{OettershagenKMM20} suggest a transformation into a static line-graph for which they propose a graph kernel based on the Weisfeiler-Leman algorithm to classify dissemination processes in networks.
In~\Cref{sec:isomorphism} we study the connections to our work.

In recent years, there has been growing interest in theoretically understanding the expressive power of GNNs.
Broadly speaking, the expressivity of a GNN refers to its capacity to capture complex structures and distinguish between different graphs.
In this paper, our notion of expressive power is the ability of a GNN to produce distinct representations for non-isomorphic graphs.
A key insight from recent theoretical works is that the expressive power of message-passing GNNs is fundamentally limited by the 1-dimensional Weisfeiler-Leman (1-WL) graph isomorphism test~\citep{xu2019powerfulgraphneuralnetworks,Morris2019}.
Both works independently showed that there exist GNNs that are at least as powerful as the 1-WL test, e.g., the Graph Isomorphism Network by ~\citet{xu2019powerfulgraphneuralnetworks}.

This led to research pushing GNN expressivity beyond the 1-WL via the $k$-dimensional WL test~\citep{morris2020weisfeiler}, edge-direction-aware variants~\citep{Rossi23}, or structual augmentations (e.g., \citet{BouritsasFZB23}). While these tricks can increase expressivity, they can also introduce challenges such as overfitting or reliance on problem-specific features. In fact,  higher expressivity does not imply better generalization \citet{franks2024weisfeilerlemanmarginexpressivitymatters}.

For a detailed review on WL-based approaches see, e.g., \citet{Morris24a,Morris2023}.

Compared to the static setting, the expressive power of temporal GNNs has not been explored as thoroughly.

In fact, recent works have highlighted that (i) the question which (temporal) patterns state-of-the-art TGNNs can actually learn is poorly understood, and (ii) that the performance of some architectures surprisingly does not seem to be influenced by the time dimension at all \cite{hayes2025temporalgraphlearningmodels}.

This is in part because the time dimension introduces an additional degree of freedom, leaving no universally agreed-upon definition of temporal graph isomorphism.
\citet{BeddarWiesing2024} propose a notion of isomorphism for dynamic graphs, which can be seen as snapshot-based temporal graphs, that considers all snapshots independently of each other.

\citet{walkega2024expressive} observe that this does not fully capture the expressive power of two important classes of TGNN architectures: global and local.
Instead, they propose \emph{timewise isomorphism}, which enforces consistency between the snapshots.
They show that global and local TGNNs differ in their abilities to detect timewise isomorphism, and neither is strictly more powerful than the other.
Similarly, ~\citet{GaoRibeiro22} interpret a temporal graph as a static multi-relational graph in which the timestamps are edge attributes.
Hence, an isomorphism must preserve their exact values.

~\citet{souza2022provably} use this notion of isomorphism to propose a temporal generalization of the 1-WL test.
Furthermore, they study a general class of TGNN architectures, which subsumes both TGN and TGAT, and show that this class has the same expressive power as their proposed temporal 1-WL test when using injective combination and aggregation functions.
For the task of link prediction, they observe that there are events that TGNNs cannot distinguish but CAW can, and vice versa.
To address this, they introduce PINT, an architecture that combines injective temporal message passing with relative positional features inspired by CAW.
Because PINT can distinguish all events that can be distinguished by injective TGNNs or CAW, it is strictly more expressive than both for the task of link prediction.

To our knowledge, no existing notions of temporal graph isomorphism precisely capture the influence of time-respecting reachability, which is crucial to understand the evolution of dynamical processes in temporal graphs.
The goal of an expressivity framework is to characterize which temporal graphs are \emph{fundamentally equivalent}, i.e., indistinguishable regardless of the learning algorithm or timestamp encoding. Timewise isomorphism makes distinguishability contingent on timestamp values: two temporal graphs with identical time-respecting paths but different timestamp values are declared non-isomorphic, so a model that cannot distinguish them is deemed insufficiently expressive---even if no downstream task depends on the specific timestamp values. This is analogous to declaring two static graphs non-isomorphic solely because their adjacency matrices use different node orderings. A principled expressivity analysis should therefore be grounded in an isomorphism notion that reflects structural equivalence rather than representational artifacts. While TGNNs can treat timestamps as features and in principle \emph{learn} invariances to timestamp transformations, our goal is a notion of equivalence that holds \emph{by construction}, independently of what any particular architecture learns.

\section{Preliminaries}
\label{sec:preliminaries}

A directed, labeled (static) graph~$\graph=(\vertices,\edges,\nodeLabel,\edgeLabel)$ consists of a set~$\vertices$ of nodes, a set~$\edges\subseteq \vertices\times \vertices$ of directed edges, a \emph{node labeling} $\nodeLabel\colon\vertices\to\nodeLabels$ and an \emph{edge labeling}~$\edgeLabel\colon\edges\to\edgeLabels$, with countable sets~$\nodeLabels$ and~$\edgeLabels$.
In unlabeled graphs, we omit~$\nodeLabel$ or~$\edgeLabel$ accordingly.
For a node~$\vertexB$, we denote its incoming neighbors by~$\inNeighbors(\vertexB) = \{ \vertexA \mid (\vertexA,\vertexB) \in \edges \}$ and its outgoing neighbors by~$\outNeighbors(\vertexB) = \{ \vertexA \mid (\vertexB,\vertexA) \in \edges \}$.

Finally, we define the set of paths $P(G)$ as the set of all alternating node/edge sequences $(v_0, e_1, v_1, \ldots, e_{k}, v_k)$ with $e_i=(v_{i-1}, v_i) \in E$ for $i \in \{1, \ldots, k\}$.

We do not distinguish between walks and paths or, equivalently, do not require paths to be simple.

\begin{definition}[Graph isomorphism]
\label{def:isomorphism}
For two static graphs~$\graph_1=(\vertices_1,\edges_1,\nodeLabel^1,\edgeLabel^1)$ and~$\graph_2=(\vertices_2,\edges_2,\nodeLabel^2,\edgeLabel^2)$, an \emph{isomorphism} is a bijective mapping~$\mapping\colon\vertices_1\to\vertices_2$ with these properties:
\begin{compactenum}[(i)]
\item \makebox[3.4cm][l]{Edge-preserving:} \makebox[5.7cm][l]{$(\vertexA,\vertexB)\in\edges_1 \ \iff\ (\mapping(\vertexA),\mapping(\vertexB))\in\edges_2$} $\forall \vertexA,\vertexB \in \vertices_1$
\item \makebox[3.4cm][l]{Node label-preserving:} \makebox[5.7cm][l]{$\nodeLabel^1(\vertexA) = \nodeLabel^2(\mapping(\vertexA))$} $\forall \vertexA \in \vertices_1$
\item \makebox[3.4cm][l]{Edge label-preserving:} \makebox[5.7cm][l]{$\edgeLabel^1(\vertexA,\vertexB) = \edgeLabel^2(\mapping(\vertexA),\mapping(\vertexB))$} $\forall (\vertexA,\vertexB) \in \edges_1$
\end{compactenum}
We say that the graphs $G_1$ and $G_2$ are \emph{isomorphic} iff such a mapping $\pi$ exists.
\end{definition}

\begin{definition}[Temporal graph]
	We define a (directed) temporal graph as $\temporalGraph=(\vertices,\temporalEdges)$, where $\vertices$ is the set of nodes and $\temporalEdges\subseteq \vertices\times \vertices\times \mathbb{N}$ is the set of timestamped edges, i.e., an edge $(\vertexA,\vertexB;t)\in\temporalEdges$ means $\vertexA$ and $\vertexB$ interact at time $t$.
\end{definition}

Note that timestamped edges represent instantaneous events, i.e., $(\vertexA,\vertexB;t)\in\temporalEdges$ does not imply $(\vertexA,\vertexB;t')\in\temporalEdges$ for $t \neq t'$.
Like \citet{OettershagenKMM20}, we assume a unit edge traversal time.
Following~\citet{Pan2011}, we assume a maximum time difference $\delta$ between consecutive edges in the following definition of time-respecting paths.
This is crucial as we often consider temporal graphs where the observation period is much longer than the timescale of processes of interest.
As an example, in a social network with timestamped interactions observed over multiple years, information typically propagates within hours or days, i.e., we are not interested in paths where consecutive edges occur in different years.

\begin{definition}[Time-respecting path]
    A path of length~$k$ in a temporal graph~$\temporalGraph=(\vertices,\temporalEdges)$ is an alternating sequence of nodes and timestamped edges~$\aPath=(\vertex_0,\edgeA_1,\vertex_1,\dots,\edgeA_k,\vertex_k)$ with~$\edgeA_i=(\vertex_{i-1},\vertex_i;t_i)\in\temporalEdges$ for~$i\in\{1,\dots,k\}$.
    For a maximum time difference (or waiting time) $\delta \in \mathbb{N}$, we say that~$\aPath$ is \emph{time-respecting} if~$1 \leq t_i - t_{i-1} \leq \delta$ for~$i\in\{2,\dots,k\}$.
    We denote the set of time-respecting paths in~$\temporalGraph$ as~$\temporalPaths(\temporalGraph)$.
\end{definition}

The structure of time-respecting paths can be encoded in the temporal event graph, which is a static graph whose nodes are the timestamped edges. Two nodes are connected by an edge if the corresponding timestamped edges form a time-respecting path of length two.

\begin{definition}[Temporal event graph]
    Let $\temporalGraph = (\vertices,\temporalEdges)$ be a temporal graph with waiting time $\delta$. The temporal event graph is given by $\eventGraph = (\temporalEdges, \eventEdges)$
    with
    \begin{align*}
        \eventEdges = \{((u,v;t),(v,w;t')) \mid &\ (u,v;t),(v,w;t')\in\temporalEdges,\\
        &\ 1 \leq t'-t \leq \delta\}.
    \end{align*}
\end{definition}

Time-respecting paths of length $k \geq 2$ in~$\temporalGraph$ correspond to paths of length~$k-1$ in~$\eventGraph$, whereas time-respecting paths of length~$1$ in~$\temporalGraph$ correspond to nodes in~$\eventGraph$.

Furthermore, we consider two static representations of temporal graphs.

\begin{definition}[Time-aggregated/concatenated static graph]
    \label{def:timeagg}
    For a temporal graph~$\temporalGraph=(\vertices,\temporalEdges)$, let~$\minTime(\temporalGraph) = \min \{ t \mid (\vertexA,\vertexB;t) \in \temporalEdges\}$ denote the earliest timestamp that occurs in~$\temporalGraph$.
    For each pair of nodes~$\vertexA,\vertexB\in\vertices$, let~$\timestamps(\vertexA,\vertexB) = \{ t - \minTime(\temporalGraph) \mid (\vertexA,\vertexB;t) \in \temporalEdges \}$ denote the set of timestamps at which the edge~$(\vertexA,\vertexB)$ occurs, relative to the earliest timestamp.
    Then the set of \emph{static edges} is given by $\staticEdges = \{(\vertexA,\vertexB) \mid \timestamps(\vertexA,\vertexB) \neq \emptyset \}$.
    The \emph{time-aggregated static graph} of~$\temporalGraph$ is the directed, edge-labeled graph $\aggrGraph = (\vertices,\staticEdges,\aggrLabel)$ with edge labels {$\aggrLabel(\vertexA,\vertexB)=|\timestamps(\vertexA,\vertexB)|$}.
    The \emph{time-concatenated static graph} of~$\temporalGraph$ is the directed, edge-labeled graph $\concGraph = (\vertices,\staticEdges,\concLabel)$ with edge labels {$\concLabel(\vertexA,\vertexB)=\timestamps(\vertexA,\vertexB)$}.
\end{definition}

The time-aggregated graph Ga is lossy: it preserves topology and edge frequencies but discards timing. Existence of a time-respecting path in~$\temporalGraph$ implies a corresponding path in~$\aggrGraph$, but the converse fails. The time-concatenated graph $G^c$ is lossless, as each static edge is labeled with all its timestamps.

\section{Isomorphisms in Temporal Graphs}
\label{sec:isomorphism}
To motivate our temporal generalization of graph isomorphism, we make the following observation.

\begin{observation}\label{obs:path-isomorphism}
    Let~$\mapping\colon\vertices_1\to\vertices_2$ be a bijective node mapping between two graphs~$\graph_1=(\vertices_1,\edges_1)$ and~$\graph_2=(\vertices_2,\edges_2)$.
    For any edge~$\edgeA=(\vertexA,\vertexB)\in\edges$, we write~$\mapping(\edgeA)=(\mapping(\vertexA),\mapping(\vertexB))$.
    Then~$\mapping$ is edge-preserving iff it is \emph{path-preserving}, i.e., the following holds for alternating node/edge sequences~$(\vertex_0,\edgeA_1,\vertex_1,\dots,\edgeA_{k},\vertex_k)$ ($k \in \mathbb{N}$):
    \begin{align*}
    &(\vertex_0,\edgeA_1,\vertex_1,\dots,\edgeA_{k},\vertex_k) \in \paths(\graph_1)\\ \Longleftrightarrow\quad&(\pi(\vertex_0),\pi(\edgeA_1),\pi(\vertex_1),\dots,\pi(\edgeA_{k}),\pi(\vertex_k)) \in \paths(\graph_2).
    \end{align*}
\end{observation}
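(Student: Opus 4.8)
The plan is to prove the two implications separately, relying on the single structural fact that membership in $\paths(\graph)$ decomposes as a conjunction over the individual edges of the sequence, together with the observation that an edge is nothing but a path whose length is one.

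First I would treat the forward implication: assuming $\mapping$ is edge-preserving, I show it is path-preserving. Fix an arbitrary alternating sequence $\aPath = (\vertex_0, \edgeA_1, \vertex_1, \dots, \edgeA_k, \vertex_k)$. Since $\mapping(\edgeA_i) = (\mapping(\vertex_{i-1}), \mapping(\vertex_i))$ by definition, its image $\mapping(\aPath) = (\mapping(\vertex_0), \mapping(\edgeA_1), \dots, \mapping(\edgeA_k), \mapping(\vertex_k))$ is again a syntactically valid alternating sequence, so both sides of the claimed biconditional are meaningful. By the definition of $\paths$, we have $\aPath \in \paths(\graph_1)$ iff $(\vertex_{i-1}, \vertex_i) \in \edges_1$ for all $i \in \{1, \dots, k\}$, and likewise $\mapping(\aPath) \in \paths(\graph_2)$ iff $(\mapping(\vertex_{i-1}), \mapping(\vertex_i)) \in \edges_2$ for all $i$. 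Edge-preservation supplies exactly the per-index equivalence $(\vertex_{i-1}, \vertex_i) \in \edges_1 \iff (\mapping(\vertex_{i-1}), \mapping(\vertex_i)) \in \edges_2$, and taking the conjunction over all $i$ yields $\aPath \in \paths(\graph_1) \iff \mapping(\aPath) \in \paths(\graph_2)$.

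Next I would prove the reverse implication by specializing path-preservation to sequences of length one. Given arbitrary nodes $\vertexA, \vertexB \in \vertices_1$, consider $(\vertexA, (\vertexA, \vertexB), \vertexB)$, which lies in $\paths(\graph_1)$ precisely when $(\vertexA, \vertexB) \in \edges_1$; its image $(\mapping(\vertexA), (\mapping(\vertexA), \mapping(\vertexB)), \mapping(\vertexB))$ lies in $\paths(\graph_2)$ precisely when $(\mapping(\vertexA), \mapping(\vertexB)) \in \edges_2$. Path-preservation equates the membership of these two sequences, so chaining the equivalences recovers $(\vertexA, \vertexB) \in \edges_1 \iff (\mapping(\vertexA), \mapping(\vertexB)) \in \edges_2$, which is exactly the edge-preservation condition.

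I do not expect a genuine obstacle, as this is essentially a bookkeeping lemma. The only two points that need care are that edge-preservation is a genuine biconditional, so the forward direction also correctly sends non-paths to non-paths (a single broken edge on one side forces a broken edge on the other), and that the length-one case used in the reverse direction is indeed admitted by the quantifier $k \in \mathbb{N}$ in the statement.
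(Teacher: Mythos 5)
Your proof is correct and matches the paper's (essentially unstated) reasoning: the paper leaves \Cref{obs:path-isomorphism} unproven, remarking only that ``adjacent edges transitively expand into paths,'' which is precisely your forward direction via edgewise decomposition of path membership, while your reverse direction via length-one sequences is the standard implicit converse. No gaps; your two points of care (the biconditional nature of edge-preservation and the admissibility of the single-edge case) are exactly the right details to pin down.
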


This is due to the fact that adjacent edges transitively expand into paths.
Thus, two isomorphic static graphs are topologically equivalent in terms of edges \emph{and} paths.
Importantly, this property does not directly translate to time-respecting paths in temporal graphs: two adjacent timestamped edges $(u,v;t)$ and $(v,w; t')$ only form a time-respecting path if~$1 \leq t' - t \leq \delta$.
Hence, a temporal generalization of graph isomorphism should preserve not only the timestamped edges, but also the \emph{causal topology} in terms of time-respecting paths.
Conversely, we are interested in an isomorphism definition that does not force \emph{values} of timestamps to be preserved as long as the resulting time-respecting paths are identical.

\begin{definition}[Time-respecting path isomorphism]\label{def:time-resp-path-iso}
    Let $\temporalGraph_1=(\vertices_1,\temporalEdges_1)$ and $\temporalGraph_2=(\vertices_2,\temporalEdges_2)$ be two temporal graphs. We say that~$\temporalGraph_1$ and~$\temporalGraph_2$ are \emph{time-respecting path isomorphic} if there is a bijective node mapping~$\vertexMapping\colon\vertices_1\rightarrow \vertices_2$ and a bijective timestamped edge mapping~$\edgeMapping\colon\temporalEdges_1\rightarrow \temporalEdges_2$ such that the following holds for alternating node/edge sequences~$(\vertex_0,\edgeA_1,\vertex_1,\dots,\edgeA_{k},\vertex_k)$ with~$k \in \mathbb{N}$:
    \begin{align*}
    &(\vertex_0,\edgeA_1,\vertex_1,\dots,\edgeA_{k},\vertex_k) \in \temporalPaths(\temporalGraph_1)\\ \Longleftrightarrow\quad&(\vertexMapping(\vertex_0),\edgeMapping(\edgeA_1),\vertexMapping(\vertex_1),\dots,\edgeMapping(\edgeA_{k}),\vertexMapping(\vertex_k))
    \in \temporalPaths(\temporalGraph_2).
    \end{align*}
\end{definition}
Note that in contrast to \Cref{obs:path-isomorphism}, this definition also includes an edge mapping.
In a static graph, each edge is uniquely defined by a pair of endpoints, so the edge mapping is induced by the node mapping.
This is not the case in temporal graphs, in which multiple timestamped edges may connect the same node pair at different times.
Hence, the edge mapping is specified separately.

Since the number of time-respecting paths may be exponential in the graph size, we derive equivalent notions that are easier to test.
In order to preserve paths of length~$1$, which consist of a single timestamped edge~$\edgeA=(\vertexA,\vertexB;t)$ and are always time-respecting, we must ensure that~$\edgeMapping(\edgeA)$ connects~$\vertexMapping(\vertexA)$ to~$\vertexMapping(\vertexB)$.
We call this property \emph{node consistency}.
Node-consistent mappings preserve paths, but not necessarily their time-respecting property.
To ensure this, we observe that time-respecting paths of length~$k \geq 2$ correspond to paths in the temporal event graph.
We can preserve them by requiring~$\edgeMapping$ to be path-preserving between the temporal event graphs.

\begin{definition}[Consistent event graph isomorphism]
    \label{def:consistent}
    Let $\temporalGraph_1=(\vertices_1,\temporalEdges_1)$ and $\temporalGraph_2=(\vertices_2,\temporalEdges_2)$ be temporal graphs with corresponding temporal event graphs $\eventGraph_1=(\temporalEdges_1, \eventEdges_1)$ and $\eventGraph_2=(\temporalEdges_2, \eventEdges_2)$.
    A mapping~$\edgeMapping\colon\temporalEdges_1\to\temporalEdges_2$ is a \emph{consistent event graph isomorphism} iff the following holds:
    \begin{compactenum}[(i)]
        \item there exists a bijective mapping~$\vertexMapping\colon\vertices_1\to\vertices_2$ such that
        $$\forall (\vertexA,\vertexB;t) \in \temporalEdges_1 \, \exists t'\colon  \edgeMapping(\vertexA,\vertexB;t)=(\vertexMapping(\vertexA), \vertexMapping(\vertexB); t')$$
        \item $\edgeMapping$ is a graph isomorphism between~$\eventGraph_1$ and~$\eventGraph_2$.
    \end{compactenum}
\end{definition}

This can be simplified further by constructing an \emph{augmented event graph} (see~\Cref{fig:aug-event-graph}) that encodes the node consistency property in its topology.
With this, we reduce the test for time-respecting path isomorphism to the test for static graph isomorphism on augmented event graphs.

\begin{figure}[ht!]
    \centering
    \begin{subfigure}[b]{0.2\textwidth}
    \centering
    \begin{tikzpicture}[xscale=\picWidth]

\node [node] (a) at (0,3) {};
\node [node] (b) at (0,2) {};
\node [node] (c) at (3,2) {};
\node [node] (d) at (1.5,1) {};
\node [node] (e) at (1.5,0) {};

\node at (a) {$a$};
\node at (b) {$b$};
\node at (c) {$c$};
\node at (d) {$d$};
\node at (e) {$e$};

\path[->] (a) edge[bend right=45] node[left] {$1$} (b);
\path[->] (a) edge[bend left=45] node[right] {$5$} (b);
\path[->] (b) edge[bend right=30] node[below left] {$2$} (d);
\path[->] (b) edge[bend left=30] node[above right] {$6$} (d);
\path[->] (c) edge[bend right=30] node[above left] {$3$} (d);
\path[->] (c) edge[bend left=30] node[below right] {$7$} (d);
\path[->] (d) edge[bend right=45] node[left] {$4$} (e);
\path[->] (d) edge[bend left=45] node[right] {$8$} (e);
\end{tikzpicture}

    \end{subfigure}
    \hspace{2cm}
    \begin{subfigure}[b]{0.2\textwidth}
    \centering
    \begin{tikzpicture}[xscale=\picWidth,yscale=0.7]

\node [node,fill=black!15] (a) at (1,4) {};
\node [node,fill=black!15] (b) at (1,3) {};
\node [node,fill=black!15] (c) at (1,2) {};
\node [node,fill=black!15] (d) at (1,1) {};
\node [node,fill=black!15] (e) at (1,0) {};
\node [node,inner sep=6pt] (ab) at (2,3.5) {};
\node [node,inner sep=6pt] (bd) at (2,2.5) {};
\node [node,inner sep=6pt] (cd) at (2,1.5) {};
\node [node,inner sep=6pt] (de) at (2,0.5) {};
\node [node,inner sep=6pt] (ab2) at (0,3.5) {};
\node [node,inner sep=6pt] (bd2) at (0,2.5) {};
\node [node,inner sep=6pt] (cd2) at (0,1.5) {};
\node [node,inner sep=6pt] (de2) at (0,0.5) {};

\node at (a) {$a$};
\node at (b) {$b$};
\node at (c) {$c$};
\node at (d) {$d$};
\node at (e) {$e$};
\node at (ab) {\small $ab^1$};
\node at (bd) {\small $bd^2$};
\node at (cd) {\small $cd^3$};
\node at (de) {\small $de^4$};
\node at (ab2) {\small $ab^5$};
\node at (bd2) {\small $bd^6$};
\node at (cd2) {\small $cd^7$};
\node at (de2) {\small $de^8$};

\draw[->] (a) -- (ab);
\draw[->] (ab) -- (b);
\draw[->] (b) -- (bd);
\draw[->] (bd) -- (d);
\draw[->] (c) -- (cd);
\draw[->] (cd) -- (d);
\draw[->] (d) -- (de);
\draw[->] (de) -- (e);
\draw[->] (ab) -- (bd);
\path[->] (bd) edge[bend left=45] (de);
\draw[->] (cd) -- (de);
\draw[->] (a) -- (ab2);
\draw[->] (ab2) -- (b);
\draw[->] (b) -- (bd2);
\draw[->] (bd2) -- (d);
\draw[->] (c) -- (cd2);
\draw[->] (cd2) -- (d);
\draw[->] (d) -- (de2);
\draw[->] (de2) -- (e);
\draw[->] (ab2) -- (bd2);
\path[->] (bd2) edge[bend right=45] (de2);
\draw[->] (cd2) -- (de2);
\end{tikzpicture}

    \end{subfigure}

    \caption{A temporal graph~$\temporalGraph$ (left) and the corresponding augmented event graph $\augmentedGraph$ (right).
    In~$\augmentedGraph$, gray nodes have label~$0$ and white nodes have label~$1$. Timestamped edges~$(\vertexA,\vertexB;t)$ are represented as nodes~$\vertexA\vertexB^t$.}
    \label{fig:aug-event-graph}
\end{figure}

\begin{definition}[Augmented event graph]\label{def:aug-ev}
    Let~$\temporalGraph=(\vertices,\temporalEdges)$ be a temporal graph with event graph~$\eventGraph=(\temporalEdges,\eventEdges)$.
    The \emph{augmented event graph} is the static, directed, node-labeled graph~$\augmentedGraph=(\augmentedVertices,\augmentedEdges,\aLabel)$ with $\augmentedVertices = \vertices \cup \temporalEdges$, $\augmentedEdges = \eventEdges \cup \outEdges \cup \inEdges$ and
    \begin{align*}
        \aLabel(\vertex) &= \begin{cases}
        0 & \text{if }\vertex \in \vertices,\\
        1 & \text{if }\vertex \in \temporalEdges,
        \end{cases}\\
    \outEdges & = \{ (\vertexA,(\vertexA,\vertexB;t)) \mid (\vertexA,\vertexB;t)\in\temporalEdges \}, \qquad \inEdges = \{ ((\vertexA,\vertexB;t),\vertexB) \mid (\vertexA,\vertexB;t)\in\temporalEdges \}.\\
    \end{align*}
\end{definition}

\begin{theorem}\label{th:iso-equiv}
    Let~$\temporalGraph_1$ and~$\temporalGraph_2$ be temporal graphs with corresponding augmented event graphs~$\augmentedGraph_1$ and~$\augmentedGraph_2$.
    Then the following are equivalent:
    \begin{compactenum}[(i)]
    \item $\temporalGraph_1$ and $\temporalGraph_2$ are time-respecting path isomorphic.
    \item $\temporalGraph_1$ and $\temporalGraph_2$ are consistent event graph isomorphic.
    \item $\augmentedGraph_1$ and~$\augmentedGraph_2$ are isomorphic.
    \end{compactenum}
\end{theorem}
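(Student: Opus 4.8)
I would prove the three statements equivalent via the cycle (i)~$\Rightarrow$~(ii)~$\Rightarrow$~(iii)~$\Rightarrow$~(i). The key structural fact that does most of the work is that the edge classes of $\augmentedGraph$ are recoverable from node labels alone: since nodes in $\vertices$ carry label~$0$, nodes in $\temporalEdges$ carry label~$1$, and $\augmentedEdges=\eventEdges\cup\outEdges\cup\inEdges$ contains no edge joining two label-$0$ nodes, the set $\outEdges$ is exactly the label-$0\!\to\!1$ edges, $\inEdges$ the label-$1\!\to\!0$ edges, and $\eventEdges$ the label-$1\!\to\!1$ edges. Consequently any label-preserving isomorphism of augmented event graphs automatically splits into bijections $\vertexMapping\colon\vertices_1\to\vertices_2$ and $\edgeMapping\colon\temporalEdges_1\to\temporalEdges_2$ that respect each class separately, and conversely such class-respecting bijections glue into one augmented-graph isomorphism.

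The second ingredient is a characterization of time-respecting paths purely in terms of $\augmentedEdges$: an alternating sequence $(\vertex_0,\edgeA_1,\dots,\edgeA_k,\vertex_k)$ is a time-respecting path of $\temporalGraph$ if and only if (a)~$(\vertex_{i-1},\edgeA_i)\in\outEdges$ and $(\edgeA_i,\vertex_i)\in\inEdges$ for every~$i$, which encodes the incidence $\edgeA_i=(\vertex_{i-1},\vertex_i;t_i)$ and thus that the sequence is a genuine path, and (b)~$(\edgeA_{i-1},\edgeA_i)\in\eventEdges$ for every~$i\ge 2$, which encodes the waiting-time constraint $1\le t_i-t_{i-1}\le\delta$. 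Both are statements about edges of $\augmentedGraph$, which is precisely what links (i) to (iii).

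\textbf{The implications.} For (i)~$\Rightarrow$~(ii): applying the time-respecting path isomorphism $(\vertexMapping,\edgeMapping)$ to the length-$1$ path $(\vertexA,\edgeA,\vertexB)$ of a single edge $\edgeA=(\vertexA,\vertexB;t)$ (always time-respecting) forces its image to be a valid path, i.e.\ $\edgeMapping(\vertexA,\vertexB;t)=(\vertexMapping(\vertexA),\vertexMapping(\vertexB);t')$, which is node consistency; applying it to length-$2$ time-respecting paths and using that the definition is an ``iff'' yields $(\edgeA,\edgeA')\in\eventEdges_1\Leftrightarrow(\edgeMapping(\edgeA),\edgeMapping(\edgeA'))\in\eventEdges_2$, so $\edgeMapping$ is an event-graph isomorphism. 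For (ii)~$\Rightarrow$~(iii): glue $\phi=\vertexMapping\sqcup\edgeMapping$; it is label-preserving, node consistency sends $\outEdges_1,\inEdges_1$ to $\outEdges_2,\inEdges_2$, and the event-graph isomorphism sends $\eventEdges_1$ to $\eventEdges_2$, so $\phi$ is an isomorphism. For (iii)~$\Rightarrow$~(i): restrict $\phi$ to the bijections $\vertexMapping,\edgeMapping$ via the label argument; by edge-class preservation, $\phi$ maps conditions (a) and (b) for any sequence of every length in $\temporalGraph_1$ to the identical conditions for its image in $\temporalGraph_2$, in both directions, so time-respecting paths correspond exactly.

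\textbf{Main obstacle.} The delicate point is the bijectivity of $\vertexMapping$ needed to glue $\phi$ in step (ii)~$\Rightarrow$~(iii). \Cref{def:consistent} only posits the \emph{existence} of some node map, and a non-injective $\vertexMapping$ can in fact satisfy node consistency together with an event-graph isomorphism: with edges $(\vertexA,\vertexC;1),(u',\vertexC;2)$ whose targets coincide there is no event edge, so mapping them to $(\vertexB,\vertexC';1),(\vertexB,\vertexC';2)$ collapses the two distinct sources onto one node while still meeting both conditions, even though $|\vertices_1|\neq|\vertices_2|$ rules out any time-respecting path isomorphism. Hence one must argue—or build into the definition—that $\vertexMapping$ is a bijection before gluing; I would resolve this by taking $\vertexMapping$ to be bijective, consistent with the intended meaning of ``isomorphism.'' The remaining steps are then routine bookkeeping once the label-based recovery of edge classes and the augmented-graph characterization of time-respecting paths are in place.
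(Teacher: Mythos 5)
Your proof is correct and rests on essentially the same two ingredients as the paper's: the correspondence between time-respecting paths of length $k\ge 2$ in~$\temporalGraph$ and paths in the event graph (used in the paper's proof of (i)$\iff$(ii) via \Cref{obs:path-isomorphism}), and the splitting/gluing of an augmented-graph isomorphism along the node labels (used in the paper's proof of (ii)$\iff$(iii)). Your arrangement as a cycle (i)$\Rightarrow$(ii)$\Rightarrow$(iii)$\Rightarrow$(i), with the label-alternating-walk characterization of time-respecting paths carrying the direct step (iii)$\Rightarrow$(i) instead of routing back through (ii), is only a cosmetic rearrangement of the same argument.

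Your ``main obstacle'' paragraph, however, is a genuine catch rather than a defect of your write-up. \Cref{def:consistent} literally only posits the \emph{existence} of some node mapping~$\vertexMapping$, and your example is valid: with $\temporalEdges_1=\{(\vertexA,\vertexC;1),(\vertexA',\vertexC;2)\}$ and $\temporalEdges_2=\{(\vertexB,\vertexC';1),(\vertexB,\vertexC';2)\}$ both event graphs are edgeless on two nodes, the obvious~$\edgeMapping$ is an event-graph isomorphism, and the non-injective $\vertexMapping(\vertexA)=\vertexMapping(\vertexA')=\vertexB$ witnesses node consistency, even though $|\vertices_1|\neq|\vertices_2|$ rules out both (i) and (iii); a symmetric failure of surjectivity arises from isolated nodes in~$\temporalGraph_2$. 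The paper's own proof of (ii)$\Rightarrow$(iii) glosses over exactly this point, asserting that $\edgeMapping$ and the induced~$\vertexMapping$ ``can be combined into a bijective mapping'' with no justification; likewise, its claim in the (i)$\iff$(ii) direction that node consistency is equivalent to path preservation tacitly needs $\vertexMapping$ injective, since a non-injective~$\vertexMapping$ can map a non-path onto a path. Your resolution---building bijectivity of~$\vertexMapping$ into the definition---matches the paper's evident intent (compare ``the induced node mapping'' in \Cref{th:strictness}(ii)) and is the correct repair; with it in place, the rest of your argument goes through as you describe.
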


\begin{proof}[Proof sketch]
The equivalence of (i) and (ii) follows from two observations: first,
node-consistent edge mappings preserve paths (by definition), and
second, time-respecting paths of length~$\geq 2$ correspond to paths
in the temporal event graph. Path-preservation between temporal event
graphs is therefore equivalent to graph isomorphism between them.
The equivalence of (ii) and (iii) follows from the construction of
$G^{aug}$: the edges $E^{out}$ and $E^{in}$ encode the node consistency
property in the topology of $G^{aug}$, so that any isomorphism between
the augmented event graphs decomposes into a node-consistent pair
$(\pi_V, \pi_E)$, and conversely. (Full proof in~\Cref{app:equiv})
\end{proof}

\subparagraph{Comparison with Other Isomorphism Definitions}
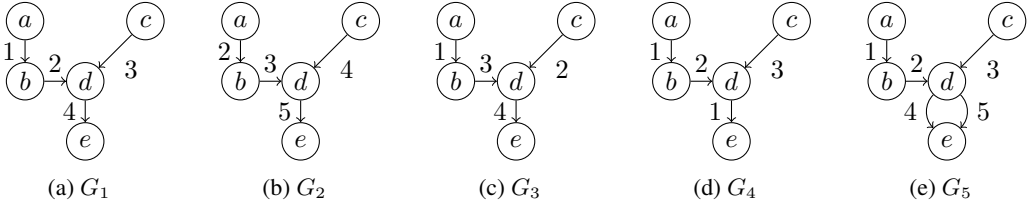
\begin{figure*}
\centering
\begin{subfigure}[b]{0.18\textwidth}
\centering
\begin{tikzpicture}[xscale=\picWidth,yscale=\picHeight]

\node [node] (a) at (0,3) {};
\node [node] (b) at (0,2) {};
\node [node] (c) at (2,3) {};
\node [node] (d) at (1,2) {};
\node [node] (e) at (1,1) {};

\node at (a) {$a$};
\node at (b) {$b$};
\node at (c) {$c$};
\node at (d) {$d$};
\node at (e) {$e$};

\draw[->] (a) -- node[left] {$1$} (b);
\draw[->] (b) -- node[above] {$2$} (d);
\draw[->] (c) -- node[below right] {$3$} (d);
\draw[->] (d) -- node[left] {$4$} (e);
\end{tikzpicture}
\caption{$G_1$}
\end{subfigure}
\hfill
\begin{subfigure}[b]{0.18\textwidth}
\centering
\begin{tikzpicture}[xscale=\picWidth,yscale=\picHeight]

\node [node] (a) at (0,3) {};
\node [node] (b) at (0,2) {};
\node [node] (c) at (2,3) {};
\node [node] (d) at (1,2) {};
\node [node] (e) at (1,1) {};

\node at (a) {$a$};
\node at (b) {$b$};
\node at (c) {$c$};
\node at (d) {$d$};
\node at (e) {$e$};

\draw[->] (a) -- node[left] {$2$} (b);
\draw[->] (b) -- node[above] {$3$} (d);
\draw[->] (c) -- node[below right] {$4$} (d);
\draw[->] (d) -- node[left] {$5$} (e);
\end{tikzpicture}
\caption{$G_2$}
\end{subfigure}
\hfill
\begin{subfigure}[b]{0.18\textwidth}
\centering
\begin{tikzpicture}[xscale=\picWidth,yscale=\picHeight]

\node [node] (a) at (0,3) {};
\node [node] (b) at (0,2) {};
\node [node] (c) at (2,3) {};
\node [node] (d) at (1,2) {};
\node [node] (e) at (1,1) {};

\node at (a) {$a$};
\node at (b) {$b$};
\node at (c) {$c$};
\node at (d) {$d$};
\node at (e) {$e$};

\draw[->] (a) -- node[left] {$1$} (b);
\draw[->] (b) -- node[above] {$3$} (d);
\draw[->] (c) -- node[below right] {$2$} (d);
\draw[->] (d) -- node[left] {$4$} (e);
\end{tikzpicture}
\caption{$G_3$}
\end{subfigure}
\hfill
\begin{subfigure}[b]{0.18\textwidth}
\centering
\begin{tikzpicture}[xscale=\picWidth,yscale=\picHeight]

\node [node] (a) at (0,3) {};
\node [node] (b) at (0,2) {};
\node [node] (c) at (2,3) {};
\node [node] (d) at (1,2) {};
\node [node] (e) at (1,1) {};

\node at (a) {$a$};
\node at (b) {$b$};
\node at (c) {$c$};
\node at (d) {$d$};
\node at (e) {$e$};

\draw[->] (a) -- node[left] {$1$} (b);
\draw[->] (b) -- node[above] {$2$} (d);
\draw[->] (c) -- node[below right] {$3$} (d);
\draw[->] (d) -- node[left] {$1$} (e);
\end{tikzpicture}
\caption{$G_4$}
\end{subfigure}
\hfill
\begin{subfigure}[b]{0.18\textwidth}
\centering
\begin{tikzpicture}[xscale=\picWidth,yscale=\picHeight]

\node [node] (a) at (0,3) {};
\node [node] (b) at (0,2) {};
\node [node] (c) at (2,3) {};
\node [node] (d) at (1,2) {};
\node [node] (e) at (1,1) {};

\node at (a) {$a$};
\node at (b) {$b$};
\node at (c) {$c$};
\node at (d) {$d$};
\node at (e) {$e$};

\path[->] (a) edge node[left] {$1$} (b);
\draw[->] (b) -- node[above] {$2$} (d);
\draw[->] (c) -- node[below right] {$3$} (d);
\path[->] (d) edge[bend right=45] node[left] {$4$} (e);
\path[->] (d) edge[bend left=45] node[right] {$5$} (e);
\end{tikzpicture}
\caption{$G_5$}
\end{subfigure}
\caption{Example illustrating different temporal graph isomorphism definitions for maximum waiting time $\delta=2$. Edges are labeled with timestamps. $G_1$ is time-concatenated isomorphic to~$G_2$, consistent event graph isomorphic to~$G_2$ and~$G_3$, and time-aggregated isomorphic to~$G_2$, $G_3$ and~$G_4$.}
\label{fig:iso-example}
\vspace{-10pt}
\end{figure*}

We reduced our notion of time-respecting path isomorphism to graph isomorphism on a special static representation of the temporal graph, namely the augmented event graph.
We now compare this to isomorphism notions that use different static representations, namely the time-aggregated and time-concatenated static graphs.

\begin{definition}[Time-aggregated/time-concatenated isomorphism]
Let~$\temporalGraph_1$ and~$\temporalGraph_2$ be two temporal graphs with the corresponding time-aggregated graphs~$\aggrGraph_1$ and~$\aggrGraph_2$ and the time-concatenated graphs~$\concGraph_1$ and~$\concGraph_2$.
We say that~$\temporalGraph_1$ and~$\temporalGraph_2$ are \emph{time-aggregated isomorphic} if~$\aggrGraph_1$ and~$\aggrGraph_2$ are isomorphic, and that they are \emph{time-concatenated isomorphic} if~$\concGraph_1$ and~$\concGraph_2$ are isomorphic.
\end{definition}

An equivalent representation of the time-concatenated static graph is a labeled multi-graph in which timestamps are treated as edge labels.
Hence, time-concatenated isomorphism is equivalent to the notions of isomorphism considered by~\citet{GaoRibeiro22} and~\citet{souza2022provably}.
Furthermore, it is similar to the \emph{timewise isomorphism} introduced by~\citet{walkega2024expressive}.
The latter was defined for temporal graphs with node labels that may change over time, which are not included in our model.
In~\Cref{app:compare}, we show that both notions are equivalent for temporal graphs without node labels.
The following theorem shows that consistent event graph isomorphism is stricter than time-aggregated isomorphism, but less strict than time-concatenated isomorphism.

\begin{theorem}\label{th:strictness}
    Let~$\temporalGraph_1=(\vertices_1,\temporalEdges_1)$ and~$\temporalGraph_2=(\vertices_2,\temporalEdges_2)$ be temporal graphs with time-aggregated graphs~$\aggrGraph_1$ and~$\aggrGraph_2$ and time-concatenated graphs~$\concGraph_1$ and~$\concGraph_2$.
    Then the following holds:
    \begin{compactenum}[(i)]
    \item If there exists an isomorphism~$\vertexMapping\colon\vertices_1\to\vertices_2$ between~$\concGraph_1$ and~$\concGraph_2$, then~$\edgeMapping \colon \temporalEdges_1 \to \temporalEdges_2$ with
    \begin{align*}
        \edgeMapping(\vertexA,\vertexB;t) =&\ (\mapping(\vertexA),\mapping(\vertexB);\minTime(\temporalGraph_2) - \minTime(\temporalGraph_1) + t)\\
        &\ \forall (\vertexA,\vertexB;t)\in\temporalEdges_1
    \end{align*}
    is a consistent event graph isomorphism.
    \item If there exists a consistent event graph isomorphism~$\edgeMapping\colon\temporalEdges_1\to\temporalEdges_2$, then the induced node mapping~$\vertexMapping\colon\vertices_1\to\vertices_2$ is a graph isomorphism between~$\aggrGraph_1$ and~$\aggrGraph_2$.
    \end{compactenum}
\end{theorem}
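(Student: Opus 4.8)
The plan is to prove the two implications separately, each time reducing the claim to a direct verification of the defining conditions on the explicit maps. For part (i) I would set $\Delta = \minTime(\temporalGraph_2) - \minTime(\temporalGraph_1)$, so that $\edgeMapping(\vertexA,\vertexB;t) = (\mapping(\vertexA),\mapping(\vertexB);t+\Delta)$, and first check that $\edgeMapping$ is a well-defined bijection $\temporalEdges_1\to\temporalEdges_2$. This follows because the isomorphism $\mapping$ preserves the concatenated edge labels: $t-\minTime(\temporalGraph_1)\in\timestamps_1(\vertexA,\vertexB)=\timestamps_2(\mapping(\vertexA),\mapping(\vertexB))$ forces $(\mapping(\vertexA),\mapping(\vertexB);t+\Delta)\in\temporalEdges_2$, and the inverse is the analogous map built from $\mapping^{-1}$ and the shift $-\Delta$. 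Condition (i) of \Cref{def:consistent} then holds by construction with node mapping $\mapping$. For condition (ii) I would verify that $\edgeMapping$ preserves event-graph edges in both directions at once: two timestamped edges $(\vertexA,\vertexB;t)$ and $(\vertexC,\vertexD;s)$ form an event-graph edge iff $\vertexB=\vertexC$ and $1\leq s-t\leq\delta$, and under $\edgeMapping$ the time difference is unchanged because the $\Delta$ cancels, while the incidence condition $\vertexB=\vertexC$ is equivalent to $\mapping(\vertexB)=\mapping(\vertexC)$ precisely because $\mapping$ is a bijection.

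For part (ii) I would start from a consistent event graph isomorphism $\edgeMapping$ together with its induced (bijective) node mapping $\mapping$, and show that $\mapping$ is an isomorphism of the edge-labeled aggregated graphs, i.e., that it preserves static edges and the labels $\aggrLabel(\vertexA,\vertexB)=|\timestamps(\vertexA,\vertexB)|$. The key observation is that node consistency forces $\edgeMapping$ to send every timestamped edge from $\vertexA$ to $\vertexB$ to a timestamped edge from $\mapping(\vertexA)$ to $\mapping(\vertexB)$; combined with the bijectivity of $\edgeMapping$ on $\temporalEdges$ and the injectivity of $\mapping$, this makes $\edgeMapping$ restrict to a bijection between the timestamped edges of the pair $(\vertexA,\vertexB)$ and those of $(\mapping(\vertexA),\mapping(\vertexB))$. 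Static-edge preservation, $(\vertexA,\vertexB)\in\staticEdges_1\iff(\mapping(\vertexA),\mapping(\vertexB))\in\staticEdges_2$, is then immediate since a node pair carries at least one timestamped edge iff its image does, and label preservation follows because the two edge sets have equal cardinality, giving $|\timestamps_1(\vertexA,\vertexB)|=|\timestamps_2(\mapping(\vertexA),\mapping(\vertexB))|$.

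The step I expect to be the main obstacle is establishing, in part (ii), the exact (surjective) correspondence between the timestamped edges of a fixed node pair and those of its image. The containment that $\edgeMapping$ maps $\vertexA\to\vertexB$ edges into $\mapping(\vertexA)\to\mapping(\vertexB)$ edges is immediate from node consistency, but obtaining equality of cardinalities requires ruling out that edges of a different pair $(\vertexD,\vertexE)$ with $\mapping(\vertexD)=\mapping(\vertexA)$ and $\mapping(\vertexE)=\mapping(\vertexB)$ also map into the same target set; this is exactly where the injectivity of $\mapping$ enters, so the argument genuinely relies on the node mapping being a bijection rather than an arbitrary map. I would also dispose of the minor point of isolated nodes, which are invisible to $\edgeMapping$: they carry no static edges and contribute no aggregated labels, so they may be matched by any bijection between the isolated nodes of $\temporalGraph_1$ and $\temporalGraph_2$ without affecting the isomorphism.
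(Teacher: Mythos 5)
Your proof is correct and follows essentially the same route as the paper's: part~(i) is the direct verification that the paper dismisses as immediate from the definitions (the shift $\Delta$ cancels in event-graph time differences, and incidence is preserved by the bijection $\vertexMapping$), and part~(ii) is exactly the paper's cardinality argument establishing $|\timestamps_1(\vertexA,\vertexB)|=|\timestamps_2(\vertexMapping(\vertexA),\vertexMapping(\vertexB))|$ via node consistency and bijectivity of $\edgeMapping$, from which static-edge and label preservation follow. The only difference is that you spell out details the paper leaves implicit --- notably that injectivity of the induced node mapping is what rules out a second node pair mapping into the same target edge set, and the treatment of isolated nodes --- which is a refinement of, not a departure from, the paper's argument.
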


\begin{proof}[Proof sketch]
Claim~(i) follows directly from the definitions: a node bijection
preserving the multiset of timestamps per edge induces a node-consistent,
time-preserving edge mapping. For claim~(ii), we use that node
consistency and bijectivity of $\pi_E$ imply $|T_1(u,v)| =
|T_2(\pi_V(u), \pi_V(v))|$ for all node pairs, which is precisely
what time-aggregated isomorphism requires. (Full proof in~\Cref{app:compare}).
\end{proof}

Time-concatenated isomorphism preserves the values of all timestamps (aside from a constant offset).
By contrast, time-aggregated isomorphism ignores the timestamps altogether.
Instead, it only considers the static topology and the number of temporal edges between each node pair.
Consistent event graph isomorphism lies between the two:
Consider edges $\edgeA=(\vertexA,\vertexB;t)$ and~$\edgeA'=(\vertexA',\vertexB';t')$.
If there is a time-respecting path that includes~$\edgeA$ before~$\edgeA'$, then~$t < t'$ is enforced.
If there is no time-respecting path that includes both edges, then the relative order of~$t$ and~$t'$ is irrelevant.

\Cref{fig:iso-example} illustrates the differences. $G_1$ and~$G_3$ are
consistent event graph isomorphic but not time-concatenated isomorphic, as
the flipped timestamps of~$(b,d)$ and~$(c,d)$ lie on no common time-respecting
path. $G_1$ and~$G_4$ are time-aggregated but not consistent event graph
isomorphic, since~$(b,d;2),(d,e;4)$ is time-respecting only in~$G_1$.

\section{Message Passing in Event Graphs}
\label{sec:messagepassing}
We use the equivalency of time-respecting path isomorphism to static isomorphism on the augmented event graph to derive a message-passing GNN architecture for temporal graphs and characterize its expressive power.
Note that the augmented event graph is directed, even if the underlying temporal graph is undirected.
Edge directions are crucial because they represent the arrow of time, which is why we use the directed GNN Dir-GNN~\citep{Rossi23}.
It iteratively computes embeddings~$f^{(k)}(v)$ for each node~$v$ at layer~$k$.
This is done by aggregating embeddings of its neighbors at layer~$k-1$, using ~$\aggrIn{k}$ for incoming neighbors and~$\aggrOut{k}$ for outgoing neighbors.
A function~$\com{k}$ combines these with the previous embedding of~$v$ to a new embedding.
Formally, we have

\begin{align*}
    f^{(0)}(v) =&\enc(\nodeLabel(v)),\\
        f^{(k)}(v) =&\com{k}\left(f^{(k-1)}(v)\right.,
    \aggrIn{k}(\multiset{f^{(k-1)}(u) \mid u\in \inNeighbors(v)})
    ,\left.\aggrOut{k}(\multiset{f^{(k-1)}(u) \mid u\in \outNeighbors(v)})\right)
\end{align*}

where~$\com{k}$, $\aggrIn{k}$ and~$\aggrOut{k}$ are learnable.
The initial node embeddings are obtained by applying an injective encoding function~$\enc$ to the node labels.
To obtain a representation of the entire graph, we combine embeddings~$f^{(k)}(v)$ of all nodes~$v$ in final layer~$k$ with an injective readout function.

Our proposed GNN architecture simply applies Dir-GNNs to the augmented event graph.
By using the augmented event graph, this approach is specifically tailored towards detecting time-respecting path isomorphism.
We give an informal analysis of its expressive power (further details in~\Cref{app:wl-mp-aug}):
A model~$M_1$ is \emph{at least as expressive} as another model~$M_2$ if~$M_1$ distinguishes all node pairs that are distinguished by~$M_2$.
If the reverse is also true, they are \emph{equally as expressive}.
Otherwise, $M_1$ is \emph{strictly more expressive}.
~\citet{Rossi23} prove that if $\com{k}$, $ \aggrIn{k}$ and $ \aggrOut{k}$ are injective, Dir-GNN has the same expressive power as a directed version of the 1-WL test, called D-WL.
Both are strictly more expressive than undirected 1-WL and GNNs, i.e., there are graphs in which Dir-GNNs and D-WL distinguish more nodes than 1-WL and GNNs.

We finally note the similarity of our approach to the temporal WL graph kernel proposed by~\citet{OettershagenKMM20}.
Their approach applies 1-WL to the \emph{directed line graph expansion}, which is equivalent to applying D-WL to the (unaugmented) event graph and does not take into account the node consistency property (\Cref{def:consistent}).
Furthermore, our TGNN architecture offers more flexibility as it can learn the combination and aggregation functions.

\section{Experimental Evaluation}
\label{sec:results}

We experimentally evaluate our theoretical findings, focusing on the following research questions:

\begin{compactitem}
    \item[\bfseries (RQ1)] Does the message passing scheme derived in \Cref{sec:isomorphism} allow to distinguish non-isomorphic temporal graphs in a supervised graph classification task?

    \item[\bfseries (RQ2)] How does our model compare to existing baseline (temporal) GNNs in empirical datasets?
    \item[\bfseries (RQ3)] How does the strength of temporal patterns expressed in time-respecting paths influence the performance of TGNN architectures?

\end{compactitem}

Following standard practice in WL-based expressivity analyses~\citep{Morris2019,morris2020weisfeiler,BouritsasFZB23,OettershagenKMM20}, we evaluate our approach in a \emph{temporal graph classification} setting. This evaluation paradigm is the established methodology for expressivity analysis: it allows performance differences to be attributed directly to a model's ability to distinguish structural differences, rather than to confounding factors such as graph size or static topology.  We follow the same principled approach. We stress that structural distinguishability is a prerequisite for \emph{any} downstream task: a model that cannot distinguish relevant causal structures will be fundamentally limited regardless of the specific prediction objective.

We address binary classification, i.e., given $n$ temporal graphs $\temporalGraph_i$ ($i=1, \ldots, n$), we want to learn a classifier $C\colon \{ \temporalGraph_i \} \rightarrow \{0, 1\}$, with balanced ground truth classes assigned.
Importantly, for a $\temporalGraph_i$ we predict a \emph{single class} rather than multiple classes for different times $t$ in the evolution of $\temporalGraph_i$.
Details of the training procedure and hyperparameters are in \Cref{app:experiments}.

In each experiment, all $n$ temporal graphs have identical time-aggregated static graphs but differ only in terms of their \emph{causal topology}, i.e., which nodes can influence each other via time-respecting paths.

To address {\bf RQ1}, we use five empirical temporal graphs from biological and social systems with high-resolution timestamps, which have previously been shown to exhibit non-trivial patterns in their causal topology. Statistics of these datasets, including the maximum waiting time $\delta$ used for each, are reported in \Cref{tab:dataset_stats} (\Cref{app:experiments}).
To create a classification task from a given empirical temporal graph, we create multiple copies of the graph.
In each copy, we modify the edge timestamps by applying one or several randomization procedures, which differ in the degree to which they preserve certain features while destroying others.
For the graphs in Class 1, the applied procedures preserve most (but not all) of the time-respecting path structure of the original graph, but the values and the relative ordering of the timestamps may change drastically.
The graphs in Class 2 undergo an additional shuffling procedure that destroys the time-respecting path structure.
To solve this task, a TGNN must learn to distinguish between timestamp differences that affect time-respecting paths and those that do not.
Details on datasets, graph sizes and shuffling procedures are in \Cref{app:experiments}.

\begin{table*}[htb]
    \centering
    \scalebox{0.87}{
    \begin{tabular}{llcccccc}
        \toprule
        \textbf{Data} & &\textbf{Our Model}&\textbf{GAT}& \textbf{TGAT} & \textbf{TGN} & \textbf{CAW} & \textbf{PINT}  \\
        \midrule
        ants-1-1 &\citep{blonder2011time}& \bfseries{1.00 $\pm$ 0.00} &$0.52\pm 0.04$ &$0.54 \pm 0.05$& $0.86 \pm 0.03$ &$0.94 \pm 0.01$&$0.86 \pm 0.03$\\
        ants-1-2 &\citep{blonder2011time}&\bfseries{1.00 $\pm$ 0.00}&$0.54\pm 0.02$&$0.54 \pm 0.04$ &$0.85 \pm 0.03$  & $0.91 \pm 0.02$ &$0.83 \pm 0.04$\\
        sp-workplace &\citep{genois2018can}& \bfseries{1.00 $\pm$ 0.00} &$0.53\pm 0.03$ &$0.55 \pm 0.04$ &$0.75 \pm 0.04$ & $0.92 \pm 0.02$ & $0.83 \pm 0.04$ \\
        sp-hospital &\citep{vanhems2013estimating}&\bfseries{1.00 $\pm$ 0.00} &$0.53\pm 0.04$&$0.55 \pm 0.03$ & $0.75 \pm 0.04$ &$0.98 \pm 0.01$ &  $0.95 \pm 0.01$\\
        eu-email-dept2 &\citep{paranjape2017motifs}& \bfseries{1.00 $\pm$ 0.00} &$0.54\pm 0.07$&$0.52 \pm 0.03$ &$0.74 \pm 0.04$ &$0.76 \pm 0.03$ &$0.91 \pm 0.02$\\
        \bottomrule
    \end{tabular}
    }
    \caption{\small Mean accuracy for binary classification in real-world temporal graphs. For each dataset, we split the timeline into windows of 500 timestamps, generate 400 graphs per window via the shuffling procedures
    (\Cref{app:experiments}), performed 10 runs per window, and report the average over all runs.}
    \label{tab:rq2}
\end{table*}
\vspace{-.25cm}

\textbf{Discussion}
Regarding {\bfseries RQ1}, the left column in \Cref{tab:rq2} shows the mean accuracy of our TGNN in classifying empirical temporal graphs ($100$ runs).
Our model perfectly distinguishes Class 1 graphs (preserved causal topology) from Class 2 (destroyed by shuffling), supporting our theoretical results.

To address {\bfseries RQ2}, we compare our model against five baseline models, including both static and temporal GNNs:
For a simple approach that closely follows the notion of timewise isomorphism, we apply the Graph Attention Network (GAT) \citep{velivckovic2017graph} to the time-concatenated static graph, where the timestamps of all edges are included as edge features.
Moreover, we evaluate the Temporal Graph Attention Network (TGAT) model \citep{xu2020inductive}, TGN~\citep{rossi2020temporalgraphnetworksdeep}, CAW \citep{wang2021inductive}, as well as PINT \citep{souza2022provably}.
We follow \citet{xu2020inductive} to adapt these models for temporal graph classification (details in \Cref{app:experiments}).

\paragraph{Discussion} Regarding {\bf RQ2}, \Cref{tab:rq2} shows the performance on the same task used for {\bf RQ1}.
GAT and TGAT do not perform significantly better than a random guess, despite following the stricter notion of timewise isomorphism. Among purely message passing-based architectures, TGN performs best, likely due to its memory module. CAW and PINT, which incorporate sampled temporal random walks, perform significantly better. Overall, message passing schemes based on timewise isomorphism struggle to identify which timestamp differences affect time-respecting paths, whereas our model achieves perfect performance by modeling this explicitly in the augmented event graph.

\begin{wrapfigure}{l}{5cm}

    \centering
    \includegraphics[width=0.38\textwidth]{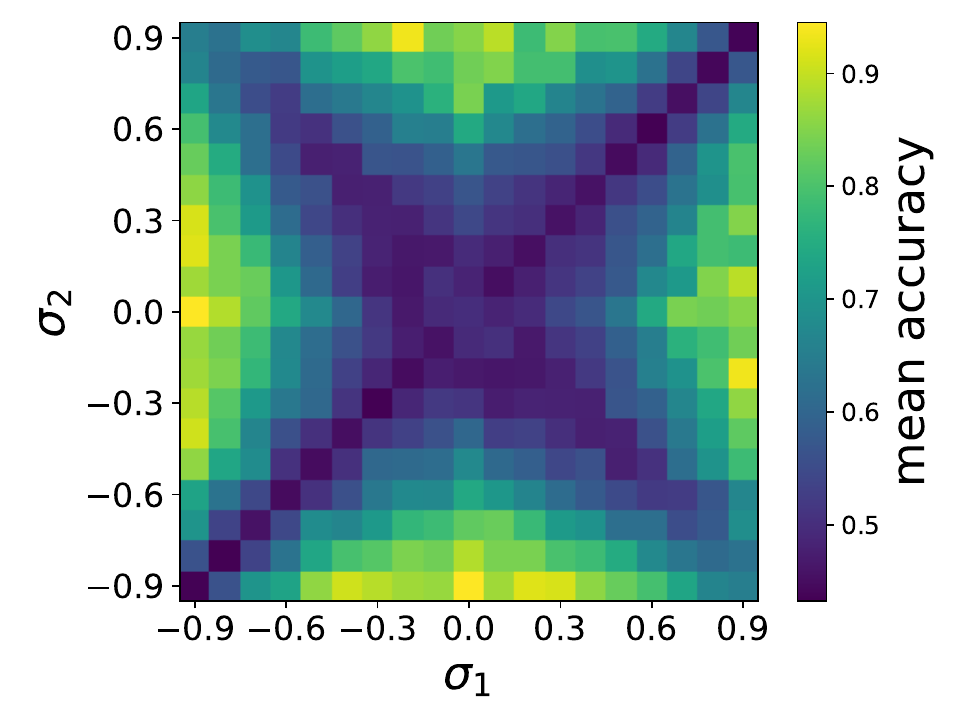}

   \caption{Mean classification accuracy for temporal graphs generated with $\sigma_1$ vs.\ $\sigma_2$ (25 runs each). \label{fig:classificationRQ3}}
   \vspace{-5pt}
\end{wrapfigure}

To address {\bfseries RQ3}, we quantify the strength of temporal patterns in terms of the causal topology.
We hypothesize that our model reliably distinguishes patterns in the time-respecting path structure provided they are sufficiently different from what is expected in a randomized version.
To test this, we use the model proposed by \citet{Scholtes2014_natcomm} to generate temporal graphs with two densely connected communities linked by few edges.
By varying a parameter $\sigma$, we selectively permute timestamps such that across-cluster time-respecting paths are over- ($\sigma > 0$) or underrepresented ($\sigma <0)$ compared to what we expect at random ($\sigma=0$).
For all $\sigma$ the time-aggregated topology, the frequency of all edges, as well as temporal activation patterns are identical.
Moreover, temporal graphs generated with values of $\sigma$ that differ more from zero exhibit stronger deviations in terms of time-respecting paths, which allows us to evaluate the sensitivity of our model.
Given two values $\sigma_1$ and $\sigma_2$, we form two classes for our classification task by generating graphs in Class 1 with $\sigma_1$ and those in Class 2 with $\sigma_2$.
In \Cref{app:experiments} we provide details on this model. For computational efficiency, we apply our message passing scheme to a compressed version of the augmented event graph (\Cref{app:compress}); on graphs generated by this model, the compression is lossless.

\paragraph{Discussion} Regarding {\bf RQ3}, \Cref{fig:classificationRQ3} shows mean accuracies of our TGNN for the experiment described above where the two classes are defined by different $\sigma_1$ and $\sigma_2$ (standard deviation in \Cref{fig:classification:stdev} in \Cref{sec:addresults}).
The results confirm that our model reliably distinguishes graphs with random patterns from non-random patterns, independent of whether cross-cluster time-respecting paths are over- ($\sigma>0$) or under-represented ($\sigma<0$).
We further find that the mean accuracy clearly depends on the strength of the pattern, quickly reaching accuracies close to one as $\sigma$ is moved away from zero.
This supports our hypothesis that our model reliably distinguishes temporal graphs provided their causal topologies are sufficiently different, as captured by different values of~$\sigma$.

\Cref{app:ablation} contains an \textbf{ablation study}, testing which components of our architecture are necessary.

\section{Conclusion}
\label{sec:conclusion}

We theoretically investigate the expressivity of temporal graph neural networks (TGNN).
We introduce a natural generalization of graph isomophism to temporal graphs by considering how the arrow of time shapes time-respecting paths and thus the causal topology of temporal graphs.
We show that this isomorphism can be heuristically tested by applying the directed and labeled Weisfeiler-Leman algorithm to augmented temporal event graphs.
This suggests that neural message passing is expressive enough to distinguish temporal graphs with identical static topology but different time-respecting paths.
An evaluation of our model in empirical and synthetic temporal graphs shows superior performance compared to existing models, which are based on stricter isomorphism definitions.
This supports arguments in \citet{Morris24a}, which highlights a complex relationship between expressivity and model performance.

\paragraph{Limitations and Open Issues}

Our work evaluates expressivity via temporal graph classification—the standard paradigm established by seminal works on GNN expressivity and graph kernels \citep{Morris2019,morris2020weisfeiler,OettershagenKMM20,BouritsasFZB23}
None of those works address node classification or link prediction, as these tasks do not directly correspond to the underlying notion of graph isomorphism. However, extending our framework to such downstream tasks is an interesting future work.
As noted in \citep{qarkaxhija2022bruijn,Heeg2025}, capturing causal topology can improve performance there, and our theoretical framework provides a foundation to analytically explain such gains.
We also did not compare our definition of temporal graph isomorphism to the notion of isomorphism on temporal computation trees, which is the basis for \citet{souza2022provably}. This interesting open question is an example for the research avenues opened by our work.

\bibliography{icml2026_conference}
\bibliographystyle{plainnat}

\newpage

\appendix
\section{Equivalence of Temporal Graph Isomorphism Notions}
\label{app:equiv}
In the following we give the proof of \Cref{th:iso-equiv}.
\begin{theorem*}
    Let~$\temporalGraph_1=(\vertices_1,\temporalEdges_1)$ and~$\temporalGraph_2=(\vertices_2,\temporalEdges_2)$ be two temporal graphs with corresponding augmented event graphs~$\augmentedGraph_1=(\augmentedVertices_1,\augmentedEdges_1,\aLabel_1)$ and~$\augmentedGraph_2=(\augmentedVertices_2,\augmentedEdges_2,\aLabel_2)$.
    Then the following statements are equivalent:
    \begin{compactenum}[(i)]
    \item $\temporalGraph_1$ and $\temporalGraph_2$ are time-respecting path isomorphic.
    \item $\temporalGraph_1$ and $\temporalGraph_2$ are consistent event graph isomorphic.
    \item $\augmentedGraph_1$ and~$\augmentedGraph_2$ are isomorphic.
    \end{compactenum}
\end{theorem*}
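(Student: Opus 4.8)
The plan is to establish the three-way equivalence through the cycle (iii) $\Rightarrow$ (ii) $\Rightarrow$ (i) $\Rightarrow$ (iii). The conceptual backbone is a single reduction: a time-respecting path $(\vertex_0,\edgeA_1,\dots,\edgeA_k,\vertex_k)$ is, for $k=1$, just a single timestamped edge (which is always time-respecting), and for $k\ge 2$ it is a valid walk whose consecutive edge pairs $(\edgeA_i,\edgeA_{i+1})$ are \emph{exactly} the edges of the event graph, since the event-graph adjacency condition $1\le t_{i+1}-t_i\le\delta$ (together with the shared intermediate node) coincides with the time-respecting condition. Thus preserving all time-respecting paths decomposes into (a) preserving single edges together with their endpoints --- i.e.\ node consistency --- and (b) preserving event-graph adjacencies. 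This is precisely what \Cref{def:consistent} isolates, and what the augmented event graph encodes structurally.

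For the two ``structural'' directions (iii) $\Rightarrow$ (ii) and (i) $\Rightarrow$ (iii) I would exploit the node labeling $\aLabel$. For (iii) $\Rightarrow$ (ii): given an isomorphism $\Phi\colon\augmentedGraph_1\to\augmentedGraph_2$, label preservation forces $\Phi(\vertices_1)=\vertices_2$ and $\Phi(\temporalEdges_1)=\temporalEdges_2$, so $\Phi$ splits into bijections $\vertexMapping:=\Phi|_{\vertices_1}$ and $\edgeMapping:=\Phi|_{\temporalEdges_1}$. The only arc into a timestamped-edge node $(\vertexA,\vertexB;t)$ from a label-$0$ node is the $\outEdges$-arc from its source $\vertexA$, and its only arc to a label-$0$ node is the $\inEdges$-arc to its target $\vertexB$; since $\Phi$ preserves these arcs, $\edgeMapping(\vertexA,\vertexB;t)$ must have source $\vertexMapping(\vertexA)$ and target $\vertexMapping(\vertexB)$, which is node consistency. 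The remaining label-$1$-to-label-$1$ arcs are exactly $\eventEdges$, so $\edgeMapping$ is an event-graph isomorphism. Conversely, for (i) $\Rightarrow$ (iii) I would glue a time-respecting path isomorphism $(\vertexMapping,\edgeMapping)$ into a single map $\Phi$ on $\augmentedVertices_1=\vertices_1\cup\temporalEdges_1$; applying the path condition to length-$1$ paths verifies the $\outEdges$- and $\inEdges$-arcs, and applying it to length-$2$ paths verifies the $\eventEdges$-arcs, so $\Phi$ is an isomorphism.

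For (ii) $\Rightarrow$ (i) I would show that the pair $(\vertexMapping,\edgeMapping)$ from \Cref{def:consistent} is itself a time-respecting path isomorphism. Node consistency makes $\edgeMapping$ send the walk $(\vertex_0,\edgeA_1,\dots,\vertex_k)$ to $(\vertexMapping(\vertex_0),\edgeMapping(\edgeA_1),\dots,\vertexMapping(\vertex_k))$ as a valid walk, and, by injectivity of $\vertexMapping$, it maps non-walks to non-walks; then, since $\edgeMapping$ preserves event-graph adjacencies in both directions, the image walk is time-respecting iff the original is, by the decomposition above. I would treat $k=1$ directly from node consistency and $k\ge 2$ through the event-graph correspondence, so no genuine induction is required --- only a conjunction over consecutive edge pairs.

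The main subtlety to watch is the role of the \emph{node} map: for the biconditional in \Cref{def:time-resp-path-iso} to hold even on sequences that are not valid walks, $\vertexMapping$ must be a bijection, since otherwise two distinct source nodes could be merged and a non-path could map to a path. This bijectivity is automatic whenever the consistent event graph isomorphism arises from (iii) (inherited from $\Phi$) or from (i), and I would read it as part of \Cref{def:consistent}, consistent with the word ``isomorphism.'' I would also check the isolated-node bookkeeping in $\augmentedVertices=\vertices\cup\temporalEdges$, so that nodes incident to no timestamped edge are matched by $\vertexMapping$ rather than left implicit; the remaining edge-preservation checks in both directions are routine.
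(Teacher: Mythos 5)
Your proof is correct and takes essentially the same route as the paper's: the same decomposition of time-respecting path preservation into node consistency (length-$1$ paths) plus event-graph adjacency preservation (paths of length at least~$2$), and the same label-based splitting and gluing of mappings on the augmented event graph, merely arranged as the cycle (iii)$\Rightarrow$(ii)$\Rightarrow$(i)$\Rightarrow$(iii) instead of two biconditionals. Your remark that $\vertexMapping$ in \Cref{def:consistent} must be read as a bijection is well taken---the paper's proof makes the same implicit assumption when it combines $\vertexMapping$ and $\edgeMapping$ into a single bijective map on the augmented vertex set.
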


We begin by showing the equivalence of~(i) and~(ii):
\begin{proof}
    Let~$\vertexMapping\colon\vertices_1\to\vertices_2$ and~$\edgeMapping\colon\temporalEdges_1\to\temporalEdges_2$ be a node and edge mapping, respectively.
    Let~$\aPath=(\vertex_0,\edgeA_1,\vertex_1,\dots,\edgeA_{k},\vertex_k)$ be an alternating sequence of nodes and timestamped edges in~$\temporalGraph_1$.
    We denote the corresponding sequence in~$\temporalGraph_2$ that is induced by~$\vertexMapping$ and~$\edgeMapping$ as~$\mapping(\aPath)=(\vertexMapping(\vertex_0),\edgeMapping(\edgeA_1),\vertexMapping(\vertex_1),\dots,\edgeMapping(\edgeA_{k}),\vertexMapping(\vertex_k))$.
    We say that~$\vertexMapping$ and~$\edgeMapping$ are \emph{path-preserving} between~$\temporalGraph_1$ and~$\temporalGraph_2$ if for each sequence~$\aPath$ as defined above, $p$ is a path in~$\temporalGraph_1$ if and only if~$\mapping(\aPath)$ is a path in~$\temporalGraph_2$.
    It is easy to see that~$\vertexMapping$ and~$\edgeMapping$ are path-preserving between~$\temporalGraph_1$ and~$\temporalGraph_2$ if and only if $\edgeMapping$ is node-consistent with~$\vertexMapping$.

    Assume therefore that~$\vertexMapping$ and~$\edgeMapping$ are path-preserving between~$\temporalGraph_1$ and~$\temporalGraph_2$.
    We show that~$\edgeMapping$ is a graph isomorphism between the temporal event graphs~$\eventGraph_1$ and~$\eventGraph_2$ if and only if it is \emph{time-preserving}, i.e., a path~$\aPath$ in~$\temporalGraph_1$ is time-respecting iff~$\mapping(\aPath)$ is time-respecting in~$\temporalGraph_2$.
    If~$k=1$, this holds trivially because all paths of length~$1$ are time-respecting.
    If~$k \geq 2$, then~$\aPath$ is time-respecting if and only if~$(\edgeA_1,(\edgeA_1,\edgeA_2),\edgeA_2,\dots,(\edgeA_{k-1},\edgeA_{k}),\edgeA_{k})$ is a path in~$\eventGraph_1$.
    Hence, $\edgeMapping$ is time-preserving if and only if it is path-preserving between~$\eventGraph_1$ and~$\eventGraph_2$.
    Because the event graphs are unlabeled, this is the case if and only if~$\edgeMapping$ is a graph isomorphism by \Cref{obs:path-isomorphism}.
\end{proof}

Next, we show the equivalence of~(ii) and~(iii):
\begin{proof}
Let~$\mapping\colon\augmentedVertices_1\to\augmentedVertices_2$ be an isomorphism between~$\augmentedGraph_1$ and~$\augmentedGraph_2$.
Because~$\mapping$ preserves the node labels, it can be decomposed into bijective mappings~$\vertexMapping\colon\vertices_1\to\vertices_2$ and~$\edgeMapping\colon\temporalEdges_1\to\temporalEdges_2$.
Then~$\edgeMapping$ is an isomorphism between~$\eventGraph_1$ and~$\eventGraph_2$ because these are subgraphs of~$\augmentedGraph_1$ and~$\augmentedGraph_2$, respectively.
Consider an edge~$\edgeA=(\vertexA,\vertexB;t)\in\temporalEdges_1$.
By construction, $\augmentedGraph_1$ includes the edges~$(\vertexA,\edgeA)\in\outEdges_1$ and~$(\edgeA,\vertexB)\in\inEdges_1$.
Because~$\mapping$ is an isomorphism, it follows that~$(\vertexMapping(\vertexA),\edgeMapping(\edgeA))\in\outEdges_2$ and~$(\edgeMapping(\edgeA),\vertexMapping(\vertexB))\in\inEdges_2$.
Then it follows by construction of~$\augmentedGraph_2$ that~$\edgeMapping(\edgeA)=(\vertexMapping(\vertexA),\vertexMapping(\vertexB);t')$ for some~$t'\in\mathbb{N}$.

Conversely, let~$\edgeMapping\colon\temporalEdges_1\to\temporalEdges_2$ be a consistent event graph isomorphism between~$\temporalGraph_1$ and~$\temporalGraph_2$, and let~$\vertexMapping\colon\vertices_1\to\vertices_2$ be the induced node mapping such that
\[ \forall (\vertexA,\vertexB;t) \in \temporalEdges_1 \quad \exists t'\colon  \edgeMapping(\vertexA,\vertexB;t)=(\vertexMapping(\vertexA), \vertexMapping(\vertexB); t'). \]
Then~$\edgeMapping$ and~$\vertexMapping$ can be combined into a bijective mapping~$\mapping\colon\augmentedVertices_1\to\augmentedVertices_2$.
We show that~$\mapping$ is an isomorphism between~$\augmentedGraph_1$ and~$\augmentedGraph_2$.
By construction, $\mapping$ preserves the node labels.
For every pair of nodes~$x,y\in\augmentedVertices_1$ and every set of edges~$\edges'\in\{\eventEdges,\outEdges,\inEdges\}$, we show that
\[ (x,y) \in \edges'_1 \quad\iff\quad (\mapping(x),\mapping(y)) \in \edges'_2.
\]
For~$\edges'=\eventEdges$, this follows from the fact that~$\edgeMapping$ is an isomorphism between~$\eventGraph_1$ and~$\eventGraph_2$.
We show the case~$\edges'=\outEdges$ (the case~$\edges'=\inEdges$ is symmetrical):
We have~$(x,y) \in \outEdges_1$ if and only if~$y=(x,\vertexB;t)$ for some~$\vertexB\in\vertices_1$ and~$t\in\mathbb{N}$.
We have~$\mapping(y)=\edgeMapping(y)=(\vertexMapping(x),\vertexMapping(\vertexB);t') = (\mapping(x),\mapping(\vertexB);t')$ for some~$t'\in\mathbb{N}$.
By definition of~$\outEdges$, we have~$(\mapping(x),\mapping(y)) \in \outEdges_2$.
\end{proof}

\section{Temporal Event Graph Compression}
\label{app:compress}
\begin{figure*}[ht!]
    \centering
    \begin{subfigure}[b]{0.3\textwidth}
    \centering
    \begin{tikzpicture}[xscale=\picWidth]

\node [node] (a) at (0,3) {};
\node [node] (b) at (0,2) {};
\node [node] (c) at (4,2) {};
\node [node] (d) at (2,1) {};
\node [node] (e) at (2,0) {};

\node at (a) {$a$};
\node at (b) {$b$};
\node at (c) {$c$};
\node at (d) {$d$};
\node at (e) {$e$};

\path[->] (a) edge[bend right=45] node[left] {$1$} (b);
\path[->] (a) edge[bend left=45] node[right] {$5$} (b);
\path[->] (b) edge[bend right=30] node[below left] {$2$} (d);
\path[->] (b) edge[bend left=30] node[above right] {$6$} (d);
\path[->] (c) edge[bend right=30] node[above left] {$3$} (d);
\path[->] (c) edge[bend left=30] node[below right] {$7$} (d);
\path[->] (d) edge[bend right=45] node[left] {$4$} (e);
\path[->] (d) edge[bend left=45] node[right] {$8$} (e);
\end{tikzpicture}

    \end{subfigure}
    \begin{subfigure}[b]{0.3\textwidth}
    \centering
    \begin{tikzpicture}[xscale=\picWidth,yscale=0.7]

\node [node,fill=black!15] (a) at (1,4) {};
\node [node,fill=black!15] (b) at (1,3) {};
\node [node,fill=black!15] (c) at (1,2) {};
\node [node,fill=black!15] (d) at (1,1) {};
\node [node,fill=black!15] (e) at (1,0) {};
\node [node,inner sep=6pt] (ab) at (2,3.5) {};
\node [node,inner sep=6pt] (bd) at (2,2.5) {};
\node [node,inner sep=6pt] (cd) at (2,1.5) {};
\node [node,inner sep=6pt] (de) at (2,0.5) {};
\node [node,inner sep=6pt] (ab2) at (0,3.5) {};
\node [node,inner sep=6pt] (bd2) at (0,2.5) {};
\node [node,inner sep=6pt] (cd2) at (0,1.5) {};
\node [node,inner sep=6pt] (de2) at (0,0.5) {};

\node at (a) {$a$};
\node at (b) {$b$};
\node at (c) {$c$};
\node at (d) {$d$};
\node at (e) {$e$};
\node at (ab) {\small $ab^1$};
\node at (bd) {\small $bd^2$};
\node at (cd) {\small $cd^3$};
\node at (de) {\small $de^4$};
\node at (ab2) {\small $ab^5$};
\node at (bd2) {\small $bd^6$};
\node at (cd2) {\small $cd^7$};
\node at (de2) {\small $de^8$};

\draw[->] (a) -- (ab);
\draw[->] (ab) -- (b);
\draw[->] (b) -- (bd);
\draw[->] (bd) -- (d);
\draw[->] (c) -- (cd);
\draw[->] (cd) -- (d);
\draw[->] (d) -- (de);
\draw[->] (de) -- (e);
\draw[->] (ab) -- (bd);
\path[->] (bd) edge[bend left=45] (de);
\draw[->] (cd) -- (de);
\draw[->] (a) -- (ab2);
\draw[->] (ab2) -- (b);
\draw[->] (b) -- (bd2);
\draw[->] (bd2) -- (d);
\draw[->] (c) -- (cd2);
\draw[->] (cd2) -- (d);
\draw[->] (d) -- (de2);
\draw[->] (de2) -- (e);
\draw[->] (ab2) -- (bd2);
\path[->] (bd2) edge[bend right=45] (de2);
\draw[->] (cd2) -- (de2);
\end{tikzpicture}

    \end{subfigure}
    \begin{subfigure}[b]{0.3\textwidth}
    \centering
    \begin{tikzpicture}[xscale=\picWidth,yscale=0.8]

\node [node,fill=black!15] (a) at (1,4) {};
\node [node,fill=black!15] (b) at (1,3) {};
\node [node,fill=black!15] (c) at (1,2) {};
\node [node,fill=black!15] (d) at (1,1) {};
\node [node,fill=black!15] (e) at (1,0) {};
\node [node,inner sep=6pt] (ab) at (2,3.5) {};
\node [node,inner sep=6pt] (bd) at (2,2.5) {};
\node [node,inner sep=6pt] (cd) at (2,1.5) {};
\node [node,inner sep=6pt] (de) at (2,0.5) {};

\node at (a) {$a$};
\node at (b) {$b$};
\node at (c) {$c$};
\node at (d) {$d$};
\node at (e) {$e$};
\node at (ab) {\small $ab^1$};
\node at (bd) {\small $bd^2$};
\node at (cd) {\small $cd^3$};
\node at (de) {\small $de^4$};

\draw[->] (a) -- (ab);
\draw[->] (ab) -- (b);
\draw[->] (b) -- (bd);
\draw[->] (bd) -- (d);
\draw[->] (c) -- (cd);
\draw[->] (cd) -- (d);
\draw[->] (d) -- (de);
\draw[->] (de) -- (e);
\path[->] (ab) edge[bend left=45] node[right] {$2$} (bd);
\path[->] (bd) edge[bend left=45] node[right] {$2$} (de);
\path[->] (cd) edge[bend left=45] node[right] {$2$} (de);
\end{tikzpicture}

    \end{subfigure}
    \caption{The temporal graph~$\temporalGraph$ (left) and corresponding augmented event graph $\augmentedGraph$ (center) from~\Cref{fig:aug-event-graph}, as well as the compressed augmented event graph $\graph^\text{comp}$ (right).
    In~$\augmentedGraph$ and~$\graph^\text{comp}$, gray nodes have label~$0$ and white nodes have label~$1$. Timestamped edges~$(\vertexA,\vertexB;t)$ are represented as nodes~$\vertexA\vertexB^t$. In~$\graph^\text{comp}$, edge weights represent the number of connected components of~$\augmentedGraph$ in which the edge appears.}
    \label{fig:aug-event-graph-compress}
\end{figure*}

To improve the computational efficiency, we propose a compressed representation of the augmented event graph (see example in \Cref{fig:aug-event-graph-compress}).
This is based on the observation that the temporal event graph often contains many connected components that represent the same set of time-respecting paths, but with different timestamps.
Merging these components reduces the size of the graph without losing any information about the causal topology.

\begin{definition}[Compressed augmented event graph]
Let~$\temporalGraph=(\vertices,\temporalEdges)$ be a temporal graph with its event graph~$\eventGraph=(\temporalEdges,\eventEdges)$.
Let~$\mathcal C$ denote the set of connected components in~$\eventGraph$.
For each connected component~$C\in\mathcal C$ and each node pair~$\vertexA,\vertexB$, let~$t_1^C<\dots<t_{k_C}^C$ be the timestamps such that~$(\vertexA,\vertexB;t_i^C)$ is a node in~$C$, sorted in ascending order.
We define the graph~$\tau(C)$ by replacing each timestamped edge~$\edgeA=(\vertexA,\vertexB;t_i^C)$ in~$C$ with~$\tau(\edgeA)=(\vertexA,\vertexB;i)$, with edges between these nodes inherited from~$C$.
Two connected components~$C_1,C_2\in\mathcal C$ are \emph{equivalent} if~$\tau(C_1)=\tau(C_2)$ as static graphs.
We compress~$\eventGraph$ by replacing each equivalence class of~$\mathcal C$ with a single representative, in which each edge is weighted with the size of the class.
The \emph{compressed augmented event graph} is then built according to \Cref{def:aug-ev}.
\end{definition}

For reasons of computational efficiency, our notion of equivalence considers the relative order of timestamped edges between the same node pair.
We note that there are cases in which connected components are not considered equivalent even if their causal topology is the same.
In these cases, the compressed augmented event graph does not preserve all isomorphisms.

\Cref{fig:compr-counter} shows an example in which a consistent event graph isomorphism between two temporal graphs~$\temporalGraph_1$ and~$\temporalGraph_2$ is lost when compressing the temporal event graph.
In the temporal event graph of both~$\temporalGraph_1$ and~$\temporalGraph_2$, all four connected components are isomorphic, but there are two equivalence classes.
This is due to the relative order of the two~$(b,c)$ edges: in some components, the edge that is adjacent to~$(a,b)$ has an earlier timestamp than the other one, but in other components the order is flipped.
The cardinality of the equivalence classes differs between~$\eventGraph_1$ and~$\eventGraph_2$, and therefore the edge weights of the representatives in the compressed event graphs differs.
Therefore, there is no node-consistent isomorphism between the compressed event graphs, even though there is one between the uncompressed ones.

This phenomenon is due to the fact that the definition of equivalence between connected components considers the relative order of timestamped edges that connect the same node pair.
This is done to ensure that connected components can be tested for equivalence efficiently.
As \Cref{fig:compr-counter} shows, it is possible for two connected components to be isomorphic even if the relative order is different.
Hence, the notion of consistent event graph isomorphism becomes slightly stricter if the event graphs are compressed.
As a result, a WL test or GNN architecture using event graph compression may distinguish some graphs that should not be distinguished.
Nevertheless, this loss in precision may be acceptable in some cases because the compression can significantly reduce the size of the graph and thereby makes the GNN easier to train.

We note that the compressed augmented event graph is only stricter than the uncompressed variant in rare cases like illustrated in \Cref{fig:aug-event-graph-compress}, where two connected components of the temporal event graph share the same causal topology but differ in the relative order of timestamped edges between the same node pair. In our experiments, the synthetic cluster model does not produce such cases, so the compression is lossless in practice. We therefore use it for the synthetic experiments to reduce graph size, while the empirical experiments use the uncompressed variant.

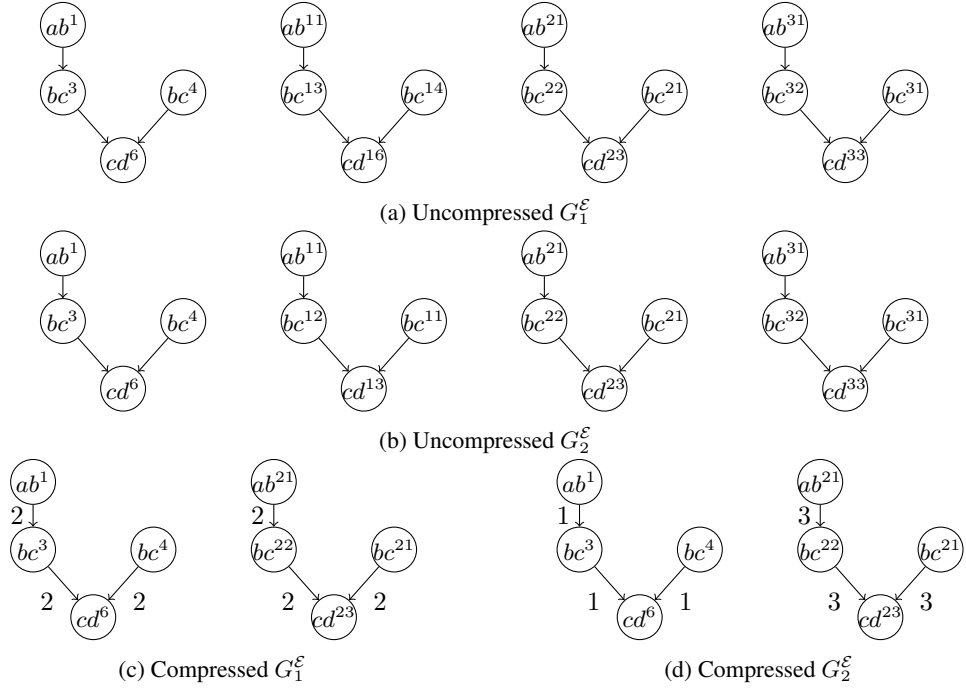
\begin{figure*}
    \centering
    \begin{subfigure}[b]{\textwidth}
    \centering
    \begin{tikzpicture}[xscale=0.8,yscale=0.9]

\node [node,inner sep=6pt] (ab1) at (0,2) {};
\node [node,inner sep=6pt] (bc1a) at (0,1) {};
\node [node,inner sep=6pt] (bc1b) at (2,1) {};
\node [node,inner sep=6pt] (cd1) at (1,0) {};

\node [node,inner sep=6pt] (ab2) at (4,2) {};
\node [node,inner sep=6pt] (bc2a) at (4,1) {};
\node [node,inner sep=6pt] (bc2b) at (6,1) {};
\node [node,inner sep=6pt] (cd2) at (5,0) {};

\node [node,inner sep=6pt] (ab3) at (8,2) {};
\node [node,inner sep=6pt] (bc3a) at (8,1) {};
\node [node,inner sep=6pt] (bc3b) at (10,1) {};
\node [node,inner sep=6pt] (cd3) at (9,0) {};

\node [node,inner sep=6pt] (ab4) at (12,2) {};
\node [node,inner sep=6pt] (bc4a) at (12,1) {};
\node [node,inner sep=6pt] (bc4b) at (14,1) {};
\node [node,inner sep=6pt] (cd4) at (13,0) {};

\node at (ab1) {\small $ab^1$};
\node at (bc1a) {\small $bc^3$};
\node at (bc1b) {\small $bc^4$};
\node at (cd1) {\small $cd^6$};

\node at (ab2) {\small $ab^{11}$};
\node at (bc2a) {\small $bc^{13}$};
\node at (bc2b) {\small $bc^{14}$};
\node at (cd2) {\small $cd^{16}$};

\node at (ab3) {\small $ab^{21}$};
\node at (bc3a) {\small $bc^{22}$};
\node at (bc3b) {\small $bc^{21}$};
\node at (cd3) {\small $cd^{23}$};

\node at (ab4) {\small $ab^{31}$};
\node at (bc4a) {\small $bc^{32}$};
\node at (bc4b) {\small $bc^{31}$};
\node at (cd4) {\small $cd^{33}$};

\draw[->] (ab1) -- (bc1a);
\draw[->] (bc1a) -- (cd1);
\draw[->] (bc1b) -- (cd1);
\draw[->] (ab2) -- (bc2a);
\draw[->] (bc2a) -- (cd2);
\draw[->] (bc2b) -- (cd2);
\draw[->] (ab3) -- (bc3a);
\draw[->] (bc3a) -- (cd3);
\draw[->] (bc3b) -- (cd3);
\draw[->] (ab4) -- (bc4a);
\draw[->] (bc4a) -- (cd4);
\draw[->] (bc4b) -- (cd4);
\end{tikzpicture}
    \caption{Uncompressed $\eventGraph_1$}
    \end{subfigure}
    \begin{subfigure}[b]{\textwidth}
    \centering
    \begin{tikzpicture}[xscale=0.8,yscale=0.9]

\node [node,inner sep=6pt] (ab1) at (0,2) {};
\node [node,inner sep=6pt] (bc1a) at (0,1) {};
\node [node,inner sep=6pt] (bc1b) at (2,1) {};
\node [node,inner sep=6pt] (cd1) at (1,0) {};

\node [node,inner sep=6pt] (ab2) at (4,2) {};
\node [node,inner sep=6pt] (bc2a) at (4,1) {};
\node [node,inner sep=6pt] (bc2b) at (6,1) {};
\node [node,inner sep=6pt] (cd2) at (5,0) {};

\node [node,inner sep=6pt] (ab3) at (8,2) {};
\node [node,inner sep=6pt] (bc3a) at (8,1) {};
\node [node,inner sep=6pt] (bc3b) at (10,1) {};
\node [node,inner sep=6pt] (cd3) at (9,0) {};

\node [node,inner sep=6pt] (ab4) at (12,2) {};
\node [node,inner sep=6pt] (bc4a) at (12,1) {};
\node [node,inner sep=6pt] (bc4b) at (14,1) {};
\node [node,inner sep=6pt] (cd4) at (13,0) {};

\node at (ab1) {\small $ab^1$};
\node at (bc1a) {\small $bc^3$};
\node at (bc1b) {\small $bc^4$};
\node at (cd1) {\small $cd^6$};

\node at (ab2) {\small $ab^{11}$};
\node at (bc2a) {\small $bc^{12}$};
\node at (bc2b) {\small $bc^{11}$};
\node at (cd2) {\small $cd^{13}$};

\node at (ab3) {\small $ab^{21}$};
\node at (bc3a) {\small $bc^{22}$};
\node at (bc3b) {\small $bc^{21}$};
\node at (cd3) {\small $cd^{23}$};

\node at (ab4) {\small $ab^{31}$};
\node at (bc4a) {\small $bc^{32}$};
\node at (bc4b) {\small $bc^{31}$};
\node at (cd4) {\small $cd^{33}$};

\draw[->] (ab1) -- (bc1a);
\draw[->] (bc1a) -- (cd1);
\draw[->] (bc1b) -- (cd1);
\draw[->] (ab2) -- (bc2a);
\draw[->] (bc2a) -- (cd2);
\draw[->] (bc2b) -- (cd2);
\draw[->] (ab3) -- (bc3a);
\draw[->] (bc3a) -- (cd3);
\draw[->] (bc3b) -- (cd3);
\draw[->] (ab4) -- (bc4a);
\draw[->] (bc4a) -- (cd4);
\draw[->] (bc4b) -- (cd4);
\end{tikzpicture}
    \caption{Uncompressed $\eventGraph_2$}
    \end{subfigure}
    \begin{subfigure}[b]{0.48\textwidth}
    \centering
    \begin{tikzpicture}[xscale=0.8,yscale=0.9]

\node [node,inner sep=6pt] (ab1) at (0,2) {};
\node [node,inner sep=6pt] (bc1a) at (0,1) {};
\node [node,inner sep=6pt] (bc1b) at (2,1) {};
\node [node,inner sep=6pt] (cd1) at (1,0) {};

\node [node,inner sep=6pt] (ab2) at (4,2) {};
\node [node,inner sep=6pt] (bc2a) at (4,1) {};
\node [node,inner sep=6pt] (bc2b) at (6,1) {};
\node [node,inner sep=6pt] (cd2) at (5,0) {};

\node at (ab1) {\small $ab^1$};
\node at (bc1a) {\small $bc^3$};
\node at (bc1b) {\small $bc^4$};
\node at (cd1) {\small $cd^6$};

\node at (ab2) {\small $ab^{21}$};
\node at (bc2a) {\small $bc^{22}$};
\node at (bc2b) {\small $bc^{21}$};
\node at (cd2) {\small $cd^{23}$};

\path[->] (ab1) edge node[left] {$2$} (bc1a);
\draw[->] (bc1a) edge node[below left] {$2$} (cd1);
\draw[->] (bc1b) edge node[below right] {$2$} (cd1);
\draw[->] (ab2) edge node[left] {$2$} (bc2a);
\draw[->] (bc2a) edge node[below left] {$2$} (cd2);
\draw[->] (bc2b) edge node[below right] {$2$} (cd2);
\end{tikzpicture}
    \caption{Compressed~$\eventGraph_1$}
    \end{subfigure}
    \hfill
    \begin{subfigure}[b]{0.48\textwidth}
    \centering
    \begin{tikzpicture}[xscale=0.8,yscale=0.9]

\node [node,inner sep=6pt] (ab1) at (0,2) {};
\node [node,inner sep=6pt] (bc1a) at (0,1) {};
\node [node,inner sep=6pt] (bc1b) at (2,1) {};
\node [node,inner sep=6pt] (cd1) at (1,0) {};

\node [node,inner sep=6pt] (ab2) at (4,2) {};
\node [node,inner sep=6pt] (bc2a) at (4,1) {};
\node [node,inner sep=6pt] (bc2b) at (6,1) {};
\node [node,inner sep=6pt] (cd2) at (5,0) {};

\node at (ab1) {\small $ab^1$};
\node at (bc1a) {\small $bc^3$};
\node at (bc1b) {\small $bc^4$};
\node at (cd1) {\small $cd^6$};

\node at (ab2) {\small $ab^{21}$};
\node at (bc2a) {\small $bc^{22}$};
\node at (bc2b) {\small $bc^{21}$};
\node at (cd2) {\small $cd^{23}$};

\path[->] (ab1) edge node[left] {$1$} (bc1a);
\draw[->] (bc1a) edge node[below left] {$1$} (cd1);
\draw[->] (bc1b) edge node[below right] {$1$} (cd1);
\draw[->] (ab2) edge node[left] {$3$} (bc2a);
\draw[->] (bc2a) edge node[below left] {$3$} (cd2);
\draw[->] (bc2b) edge node[below right] {$3$} (cd2);
\end{tikzpicture}
    \caption{Compressed~$\eventGraph_2$}
    \end{subfigure}
    \caption{An example of two temporal graphs~$\temporalGraph_1$ and~$\temporalGraph_2$ which are consistent event graph isomorphic, but there is no node-consistent isomorphism between the compressed event graphs.}
    \label{fig:compr-counter}
\end{figure*}

\section{Comparison with Other Graph Isomorphisms}
\label{app:compare}

In the following we give the proof of \Cref{th:strictness}.

\begin{theorem*}
    Let~$\temporalGraph_1=(\vertices_1,\temporalEdges_1)$ and~$\temporalGraph_2=(\vertices_2,\temporalEdges_2)$ be two temporal graphs with time-aggregated graphs~$\aggrGraph_1=(\vertices_1,\staticEdges_1,\aggrLabel_1)$ and~$\aggrGraph_2=(\vertices_2,\staticEdges_2,\aggrLabel_2)$ and time-concatenated graphs~$\concGraph_1=(\vertices_1,\staticEdges_1,\concLabel_1)$ and~$\concGraph_2=(\vertices_2,\staticEdges_2,\concLabel_2)$.
    Then the following holds:
    \begin{compactenum}[(i)]
    \item If there exists an isomorphism~$\vertexMapping\colon\vertices_1\to\vertices_2$ between~$\concGraph_1$ and~$\concGraph_2$, then~$\edgeMapping \colon \temporalEdges_1 \to \temporalEdges_2$ with
    \begin{align*}
        \edgeMapping(\vertexA,\vertexB;t) =&\ (\mapping(\vertexA),\mapping(\vertexB);\minTime(\temporalGraph_2) - \minTime(\temporalGraph_1) + t)\\
        &\ \forall (\vertexA,\vertexB;t)\in\temporalEdges_1
    \end{align*}
    is a consistent event graph isomorphism.
    \item If there exists a consistent event graph isomorphism~$\edgeMapping\colon\temporalEdges_1\to\temporalEdges_2$, then the induced node mapping~$\vertexMapping\colon\vertices_1\to\vertices_2$ is a graph isomorphism between~$\aggrGraph_1$ and~$\aggrGraph_2$.
    \end{compactenum}
\end{theorem*}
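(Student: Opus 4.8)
The plan is to prove the two implications separately, since each links consistent event graph isomorphism to a different static representation. In both directions the whole argument rests on one elementary fact: adding a constant offset to every timestamp leaves all pairwise time differences, and hence the entire event graph structure, unchanged.

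For part~(i) I would start from the time-concatenated isomorphism $\vertexMapping$, which by \Cref{def:isomorphism} is a bijection preserving the edge labels, so $\timestamps(\vertexA,\vertexB)=\timestamps(\vertexMapping(\vertexA),\vertexMapping(\vertexB))$ for every node pair (both taken relative to the respective minimal timestamp). Writing $\Delta=\minTime(\temporalGraph_2)-\minTime(\temporalGraph_1)$, I would first check that $\edgeMapping(\vertexA,\vertexB;t)=(\vertexMapping(\vertexA),\vertexMapping(\vertexB);t+\Delta)$ is well defined and bijective: since the relative timestamp sets agree under $\vertexMapping$, an edge $(\vertexA,\vertexB;t)\in\temporalEdges_1$ has relative timestamp $t-\minTime(\temporalGraph_1)\in\timestamps(\vertexMapping(\vertexA),\vertexMapping(\vertexB))$, which forces $(\vertexMapping(\vertexA),\vertexMapping(\vertexB);t+\Delta)\in\temporalEdges_2$, and the inverse is the analogous shift by $-\Delta$ via $\vertexMapping^{-1}$. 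Node consistency then holds by the very shape of $\edgeMapping$. It remains to verify that $\edgeMapping$ is an isomorphism of the event graphs, and this is where shift-invariance enters: a pair $((\vertexA,\vertexB;t),(\vertexC,\vertexD;s))$ lies in $\eventEdges_1$ exactly when $\vertexB=\vertexC$ and $1\le s-t\le\delta$; since $\edgeMapping$ adds the same $\Delta$ to both timestamps and sends the shared endpoint $\vertexB$ to $\vertexMapping(\vertexB)$, the difference $s-t$ is unchanged and the incidence condition transports in both directions (using that $\vertexMapping$ is a bijection, so $\vertexMapping(\vertexB)=\vertexMapping(\vertexC)\iff\vertexB=\vertexC$). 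This establishes both conditions of \Cref{def:consistent}.

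For part~(ii) I would exploit that $\edgeMapping$, being an event graph isomorphism, is a bijection $\temporalEdges_1\to\temporalEdges_2$, and that the induced $\vertexMapping$ is a bijection. The key device is to partition timestamped edges into \emph{blocks} indexed by static edges: set $B_i(\vertexA,\vertexB)=\{(\vertexA,\vertexB;t)\in\temporalEdges_i\}$, so that $\aggrLabel_i(\vertexA,\vertexB)=|B_i(\vertexA,\vertexB)|$ and $(\vertexA,\vertexB)\in\staticEdges_i$ iff this block is nonempty. Node consistency says exactly that $\edgeMapping$ maps $B_1(\vertexA,\vertexB)$ into $B_2(\vertexMapping(\vertexA),\vertexMapping(\vertexB))$; I would observe that $\edgeMapping^{-1}$ is itself node-consistent with $\vertexMapping^{-1}$, so it carries the target block back into $B_1(\vertexA,\vertexB)$. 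Sandwiching the two inclusions with injectivity forces $\edgeMapping$ to restrict to a bijection of each block onto its image block. This simultaneously yields label-preservation $\aggrLabel_1(\vertexA,\vertexB)=\aggrLabel_2(\vertexMapping(\vertexA),\vertexMapping(\vertexB))$ and edge-preservation (nonempty block iff nonempty image block), which together with bijectivity of $\vertexMapping$ are precisely the requirements of \Cref{def:isomorphism} for $\aggrGraph_1$ and $\aggrGraph_2$.

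The step I expect to be most delicate is guaranteeing that the induced $\vertexMapping$ in part~(ii) is a genuine \emph{bijection} on all of $\vertices_1$, rather than a map pinned down by node consistency only on the endpoints of timestamped edges; isolated nodes and a priori possible collisions $\vertexMapping(\vertexA)=\vertexMapping(\vertexC)$ are not directly excluded by \Cref{def:consistent} alone. I would resolve this by appealing to \Cref{th:iso-equiv}: a consistent event graph isomorphism corresponds to an isomorphism $\mapping$ of the augmented event graphs, and since $\mapping$ is a label-preserving bijection of $\augmentedVertices=\vertices\cup\temporalEdges$, its restriction to the label-$0$ vertices is exactly the required bijection $\vertexMapping$. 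Once bijectivity is secured, the block-counting argument above is routine.
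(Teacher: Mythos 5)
Your proof is correct and takes essentially the same route as the paper: your part~(i) is just the detailed verification that the paper compresses into ``follows directly from the definitions,'' and your block argument in part~(ii) is the paper's own per-node-pair counting argument showing $|\timestamps_1(\vertexA,\vertexB)| = |\timestamps_2(\vertexMapping(\vertexA),\vertexMapping(\vertexB))|$, from which edge- and label-preservation of the time-aggregated graphs follow. Your closing remark on bijectivity of the induced $\vertexMapping$ is well taken and, if anything, more careful than the published proof, which silently uses injectivity of $\vertexMapping$ in the same counting step without the appeal to \Cref{th:iso-equiv} that you make explicit.
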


\begin{proof}
    Claim~(i) follows directly from the definitions.
    For claim~(ii), let~$\edgeMapping\colon\temporalEdges_1\to\temporalEdges_2$ be a consistent event graph isomorphism.
    For every node pair~$\vertexA,\vertexB\in\vertices_1$, we have
\begin{align*}
|\timestamps_1(\vertexA,\vertexB)| &= |\{(\vertexA,\vertexB;t) \in \temporalEdges_1\}|\\
    &= |\{\edgeMapping(\vertexA,\vertexB;t') \in \temporalEdges_2\}|\\
    &= |\{(\vertexMapping(\vertexA),\vertexMapping(\vertexB);t') \in \temporalEdges_2\}|\\
    &= |\timestamps_2(\vertexMapping(\vertexA),\vertexMapping(\vertexB))|.
\end{align*}
Here, we use the node consistency property and the fact that~$\edgeMapping$ is a bijection.
It follows that
\begin{align*}
 (\vertexA,\vertexB)\in\staticEdges_1 &\iff \timestamps_1(\vertexA,\vertexB) \neq \emptyset\\
 &\iff \timestamps_2(\vertexMapping(\vertexA),\vertexMapping(\vertexB)) \neq \emptyset\\
 &\iff (\vertexMapping(\vertexA),\vertexMapping(\vertexB))\in\staticEdges_2
\end{align*}
and
\begin{align*}
     \aggrLabel_1(\vertexA,\vertexB) = |\timestamps_1(\vertexA,\vertexB)| &= |\timestamps_2(\vertexMapping(\vertexA),\vertexMapping(\vertexB))|\\ &= \aggrLabel_2(\vertexMapping(\vertexA),\vertexMapping(\vertexB)).\qedhere
\end{align*}
\end{proof}

\subsection{Timewise Isomorphism with Time-Variant Node Labels}
\citet{walkega2024expressive} present isomorphism definitions for snapshot-based temporal graphs in which the nodes have labels that can vary over time.

\begin{definition}[Snapshot-based temporal graph]
    A snapshot-based temporal graph is a sequence~$(\graph_1,t_1),\dots,(\graph_n,t_n)$ with~$t_1 < \dots < t_n$ and~$\graph_i=(\vertices,\edges_i,c_i)$, where~$c_i \colon \vertices \to \nodeLabels$ is a node labeling.
\end{definition}

\begin{definition}[Timewise isomorphism]
Two snapshot-based temporal graphs~$(\graph_1,t_1),\dots,(\graph_n,t_n)$ and~$(\hat{\graph}_1,\hat{t}_1),\dots,(\hat{\graph}_{\hat{n}},\hat{t}_{\hat{n}})$ are timewise isomorphic if~$n=\hat{n}$, $t_i - t_1 = \hat{t}_i - \hat{t}_1$ for all~$1 \leq i \leq n$, and there is a bijection~$\mapping \colon \vertices \to \vertices'$ such that $\mapping$ is an isomorphism between~$\graph_i$ and~$\hat{\graph}_i$ for all~$1 \leq i \leq n$.
\end{definition}

We consider the case that there are no node labels.
In this case, a snapshot-based temporal graph~$(\graph_1,t_1),\dots,(\graph_n,t_n)$ has an equivalent temporal graph~$\temporalGraph=(\vertices,\temporalEdges)$ according to our definition with
\[ \temporalEdges = \{ (\vertexA,\vertexB;t_i) \mid (\vertexA,\vertexB) \in \edges_i, 1 \leq i \leq n \}. \]

The following theorem shows that if there are no node labels, timewise isomorphism is equivalent to isomorphism of the time-concatenated static graphs.

\begin{theorem}
    Let~$(\graph_1,t_1),\dots,(\graph_n,t_n)$ and~$(\hat{\graph}_1,t_1),\dots,(\hat{\graph}_n,t_n)$ be two snapshot-based temporal graphs without node labels, $E_i\not= \emptyset$, and let~$\mapping\colon\vertices\to\hat{\vertices}$ be a node bijection.
    Then~$\mapping$ is an isomorphism between the time-concatenated static graphs~$\concGraph = (\vertices,\staticEdges,\concLabel)$ and~$\hat{\concGraph} = (\hat{\vertices},\hat{\staticEdges},\hat{\concLabel})$ iff~$n = \hat{n}$, $t_i - t_1 = \hat{t}_i - \hat{t}_1$ for all~$1 \leq i \leq n$, and~$\mapping$ is an isomorphism between~$\graph_i$ and~$\hat{\graph}_i$ for all~$1 \leq i \leq n$.
\end{theorem}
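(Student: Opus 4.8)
The plan is to exploit the fact that, once node labels are dropped, the time-concatenated static graph and the snapshot sequence encode exactly the same data, so an isomorphism on one side transports to the other. Concretely, for an edge $(\vertexA,\vertexB)$ the label $\concLabel(\vertexA,\vertexB)$ is by definition the set $\{t_i - \minTime(\temporalGraph) \mid (\vertexA,\vertexB)\in\edges_i\}$ of relative timestamps at which that edge is active. Inverting this, for any relative timestamp $s$ the snapshot edge set at $s$ is recovered as $\{(\vertexA,\vertexB) \mid s\in\concLabel(\vertexA,\vertexB)\}$. I would state this correspondence first and observe that, under the standing assumption that every snapshot is non-empty (so that $\minTime(\temporalGraph)=t_1$), the set of relative timestamps occurring in $\concGraph$ is exactly $\{t_i - t_1\}_{i=1}^n$, with the $t_i$ distinct since $t_1<\dots<t_n$.

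For the $(\Leftarrow)$ direction I would assume $n=\hat n$, that $t_i - t_1 = \hat t_i - \hat t_1$ for all $i$, and that $\mapping$ is an isomorphism of every pair $\graph_i,\hat\graph_i$. Edge preservation for $\concGraph$ is then immediate, since $(\vertexA,\vertexB)\in\staticEdges$ iff $(\vertexA,\vertexB)\in\edges_i$ for some $i$ and $\mapping$ preserves each $\edges_i$. For label preservation I would expand $\concLabel(\vertexA,\vertexB)=\{t_i-t_1 \mid (\vertexA,\vertexB)\in\edges_i\}$ and substitute the equivalences $(\vertexA,\vertexB)\in\edges_i \Leftrightarrow (\mapping(\vertexA),\mapping(\vertexB))\in\hat\edges_i$ together with $t_i-t_1 = \hat t_i - \hat t_1$, landing on $\hat\concLabel(\mapping(\vertexA),\mapping(\vertexB))$. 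This direction is routine.

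For the $(\Rightarrow)$ direction I would assume $\mapping$ is an isomorphism of the time-concatenated graphs and reverse the computation. Since $\mapping$ induces a label-preserving bijection between the edges of $\concGraph$ and $\hat\concGraph$, the union of all edge labels agrees on the two sides; this common set $S$ of relative timestamps equals $\{t_i - t_1\}$ on one side and $\{\hat t_j - \hat t_1\}$ on the other. Listing both in increasing order and comparing term by term gives $n=|S|=\hat n$ and $t_i - t_1 = \hat t_i - \hat t_1$. Finally, fixing any $s = t_i - t_1 = \hat t_i - \hat t_1$ and applying the snapshot-recovery formula, the equivalence $s\in\concLabel(\vertexA,\vertexB) \Leftrightarrow s\in\hat\concLabel(\mapping(\vertexA),\mapping(\vertexB))$ translates directly into $(\vertexA,\vertexB)\in\edges_i \Leftrightarrow (\mapping(\vertexA),\mapping(\vertexB))\in\hat\edges_i$, i.e.\ $\mapping$ is an isomorphism of $\graph_i$ and $\hat\graph_i$.

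I expect the main obstacle to be bookkeeping of the time origin rather than any deep argument. The labels are normalized relative to $\minTime(\temporalGraph)$, the earliest \emph{occurring} timestamp, whereas the timewise conditions are phrased relative to $t_1$; these coincide only when the first snapshot is non-empty, and the conclusion $n=\hat n$ genuinely fails if empty snapshots are permitted, since inserting an extra empty snapshot leaves the time-concatenated graph unchanged. I would therefore make explicit the (standard) assumption that every snapshot carries at least one edge, guaranteeing that the relative timestamps occurring in $\concGraph$ are exactly $\{t_i - t_1\}$ with no phantom or missing entries; with that assumption in place, the remaining steps are the mechanical translations sketched above.
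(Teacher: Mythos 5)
Your proof follows essentially the same route as the paper's: both reduce isomorphism of the time-concatenated graphs to the equality $\timestamps(\vertexA,\vertexB) = \hat{\timestamps}(\mapping(\vertexA),\mapping(\vertexB))$ of relative-timestamp label sets, prove the backward direction by direct substitution, and in the forward direction recover $n = \hat{n}$, the timestamp alignment, and the per-snapshot isomorphisms from the set of occurring relative timestamps (the paper phrases this as a pigeonhole argument, you as a sorted term-by-term comparison of the union of labels). Your explicit non-empty-snapshot assumption is a genuine improvement in rigor: the paper tacitly uses $\minTime(\temporalGraph) = t_1$ and that every $t_i - t_1$ occurs in some label, and as your inserted-empty-snapshot example shows, the conclusion $n = \hat{n}$ can indeed fail without that assumption.
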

\begin{proof}
For each node pair~$\vertexA,\vertexB\in\vertices$, we have
\begin{align*}
\timestamps(\vertexA,\vertexB) &= \{ t - \minTime(\temporalGraph) \mid (\vertexA,\vertexB;t) \in \temporalEdges \}\\
&= \{ t_i - t_1 \mid (\vertexA,\vertexB) \in \edges_i, 1 \leq i \leq n \}.
\end{align*}
We note that~$\mapping$ is an isomorphism between~$\concGraph$ and~$\hat{\concGraph}$ iff~$\timestamps(\vertexA,\vertexB) = \hat{\timestamps}(\mapping(\vertexA),\mapping(\vertexB))$ holds for all~$\vertexA,\vertexB\in\vertices$.
If~$n=\hat{n}$, $t_i - t_1 = \hat{t}_i - \hat{t}_1$ for all~$1 \leq i \leq n$, and~$\mapping$ is an isomorphism between~$\graph_i$ and~$\hat{\graph}_i$ for all~$1 \leq i \leq n$, then it is easy to see that~$\timestamps(\vertexA,\vertexB) = \hat{\timestamps}(\mapping(\vertexA),\mapping(\vertexB))$ for all~$\vertexA,\vertexB\in\vertices$.
On the other hand, assume that~$\timestamps(\vertexA,\vertexB) = \hat{\timestamps}(\mapping(\vertexA),\mapping(\vertexB))$ for all~$\vertexA,\vertexB\in\vertices$.
We have
\begin{align*}
(\vertexA,\vertexB) \in \edges_i &\iff t_i - t_1 \in \timestamps(\vertexA,\vertexB)\\
&\iff t_i - t_1 \in \hat{\timestamps}(\mapping(\vertexA),\mapping(\vertexB))\\
&\iff \exists j\colon \hat{t}_j - \hat{t}_1 = t_i - t_1\\
&\phantom{\iff \exists j\colon}\land \hat{t}_j - \hat{t}_1 \in \hat{\timestamps}(\mapping(\vertexA),\mapping(\vertexB))\\
&\iff \exists j\colon \hat{t}_j - \hat{t}_1 = t_i - t_1\\
&\phantom{\iff \exists j\colon}\land (\mapping(\vertexA),\mapping(\vertexB)) \in \hat{\edges}_j.
\end{align*}
Within each temporal graph, all timestamps are distinct from each other, so by the pigeonholing principle, we have~$n=\hat{n}$ and~$t_i - t_1 = \hat{t}_j - \hat{t}_1$ for all~$1 \leq i \leq n$.
Then it follows that~$\mapping$ is an isomorphism between~$\graph_i$ and~$\hat{\graph}_i$ for all~$1 \leq i \leq n$.
\end{proof}

\section{Expressive Power of Message Passing on the Augmented Event Graph}
\label{app:wl-mp-aug}

We formally analyze the expressive power of the message passing approach presented in~\Cref{sec:messagepassing}, which applies Dir-GNN to the augmented event graph.
\citet{Rossi23} propose a directed version of the 1-WL test, called D-WL.
It is a \emph{color refinement algorithm} that iteratively computes a node coloring~$c^{(t)}\colon \vertices \to \mathbb{N}$ for each iteration~$t \geq 0$ as follows:
\begin{align*}
c^{(0)}(v) = &\ \nodeLabel(v) \\
c^{(t)}(v) = &\ \hash\left(c^{(t-1)}(v),\right.\\
    &\ \multiset{c^{(t-1)}(u) \mid u\in \inNeighbors(v)},\\
    &\left.\ \multiset{c^{(t-1)}(u) \mid u\in \outNeighbors(v)}\right).
\end{align*}
Here, $\hash$ is an injective function.
Compared to the standard 1-WL test, D-WL takes edge directions into account by considering the multisets of node colors of the incoming and outgoing neighbors separately.

The node embeddings computed by GNNs can also be interpreted as node colorings, and thus GNNs can be seen as color refinement algorithms.
The ability of color refinement algorithms to distinguish nodes is captured by the following definition.

\begin{definition}[Expressivity]
    A color refinement algorithm~$c$ is at least as expressive as another color refinement algorithm~$\hat{c}$ if for all graphs~$\graph$, all iterations~$t \geq 0$ and all node pairs~$\vertexA,\vertexB$, it holds that~$\hat{c}^{(t)}(\vertexA) \neq \hat{c}^{(t)}(\vertexB)$ implies~$c^{(t)}(\vertexA) \neq c^{(t)}(\vertexB)$. If the reverse implication holds as well, $c$ and~$\hat{c}$ are equally as expressive. Otherwise, $c$ is strictly more expressive than~$\hat{c}$.
\end{definition}

\citet{Morris2019} show that undirected GNNs are equally as expressive as undirected 1-WL if the aggregation and combination functions are injective.
\citet{Rossi23} extend this result to the directed variants: Dir-GNN is equally as expressive as D-WL if $\com{t}$, $ \aggrIn{t}$ and $ \aggrOut{t}$ are injective.
Furthermore, both are strictly more expressive than undirected 1-WL and GNNs, i.e., there are graphs in which Dir-GNNs and D-WL distinguish more nodes than 1-WL and GNNs.
Because the augmented event graph is a static directed graph, these results automatically carry over to our setting:

\begin{theorem}
    If $\com{t}$, $ \aggrIn{t}$ and $ \aggrOut{t}$ are injective, Dir-GNN on the augmented event graph has the same expressive power as D-WL on the augmented event graph.
\end{theorem}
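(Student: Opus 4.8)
The plan is to recognize this statement as a direct instantiation of the general equivalence between Dir-GNN and D-WL established by Rossi et al.~\cite{Rossi23}, specialized to the augmented event graph, rather than a theorem requiring new machinery. First I would recall that their result is stated for arbitrary static, directed, node-labeled graphs: whenever $\com{t}$, $\aggrIn{t}$ and $\aggrOut{t}$ are injective, the colorings produced by Dir-GNN and by D-WL separate exactly the same node pairs. Since the augmented event graph~$\augmentedGraph$ is, by \Cref{def:aug-ev}, precisely such a graph---it is static, directed, and carries node labels~$\aLabel$ distinguishing original nodes from timestamped edges---the hypotheses of their theorem are met verbatim, and the conclusion transfers with no extra work.

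Next I would verify that the two color-refinement processes defined in this appendix are literally the update rules analyzed in~\cite{Rossi23}, so that no re-derivation is needed. Both initialize from the node label, $c^{(0)}(v)=\nodeLabel(v)$ respectively $f^{(0)}(v)=\enc(\nodeLabel(v))$ with~$\enc$ injective, and both aggregate the multisets of neighbor colors over~$\inNeighbors(v)$ and~$\outNeighbors(v)$ separately before combining with the node's own previous color. Because~$\enc$ is injective, the GNN initialization distinguishes exactly the same nodes as the D-WL initialization; this is the base case of the induction.

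The two inclusions then follow from the generic induction on~$t$, which I would recall for completeness. For the forward direction, injectivity of $\com{t}$, $\aggrIn{t}$ and $\aggrOut{t}$ forces $f^{(t)}(u)=f^{(t)}(v)$ to imply equality of the combined inputs, hence of $f^{(t-1)}(u),f^{(t-1)}(v)$ and of the incoming and outgoing $f^{(t-1)}$-multisets; the induction hypothesis transfers these equalities to the D-WL colorings, yielding $c^{(t)}(u)=c^{(t)}(v)$, so Dir-GNN is at least as expressive as D-WL. Conversely, injectivity of the $\hash$ function forces $c^{(t)}(u)=c^{(t)}(v)$ to imply the analogous equalities of colors and multisets, which the induction hypothesis transfers to the $f^{(t-1)}$-values and hence to $f^{(t)}(u)=f^{(t)}(v)$; notably this direction needs no injectivity of the GNN maps and shows D-WL is at least as expressive as Dir-GNN. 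Both holding for every node pair, every iteration, and every graph---in particular every augmented event graph---gives equal expressivity on~$\augmentedGraph$.

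I do not expect a genuine obstacle here: the content is entirely inherited from~\cite{Rossi23}, and the only thing to check is that the augmented event graph lies in the class of graphs to which that theorem applies, which it does by construction. The single point requiring care is the handling of node labels---one must ensure the encoding~$\enc$ is injective on the label set, so that the binary labels carried by~$\aLabel$ are faithfully preserved as initial features and the base case goes through---but this injectivity is already assumed in the message passing setup of \Cref{sec:messagepassing}.
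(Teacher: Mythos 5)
Your proposal is correct and matches the paper's own argument: the paper likewise proves this theorem purely by observing that the augmented event graph is a static, directed, node-labeled graph, so the Dir-GNN/D-WL equivalence of Rossi et al.\ carries over verbatim. Your recap of the underlying induction merely fills in the cited proof and introduces nothing different in approach.
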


\section{Detailed Experimental Protocol}
\label{app:experiments}

In the following, we give a detailed overview of our experimental protocol described in \Cref{sec:results}.

\begin{table}[h]
\centering

\begin{tabular}{llrrrr}
\toprule
Dataset && $|V|$ & $|E|$ & $|E^\tau|$ & $\delta$ in sec. \\
\midrule
ants-1-1 &\cite{blonder2011time} &89 &947 & 1,911 & 30 \\
ants-1-2 &\cite{blonder2011time} & 72 & 862 & 1,820 & 30 \\
sp-workplace& \cite{genois2018can} & 92 &755 & 9,827 & 3,600 \\
sp-hospital &\cite{vanhems2013estimating} & 75 & 1,139 & 32,424 & 3,600 \\
eu-email-dept2 &\cite{paranjape2017motifs} & 162 & 1,772 & 46,772 & 3,600 \\
\bottomrule
\\
\end{tabular}
\caption{Statistics of empirical temporal graphs used in our evaluation. $|V|$ is the number of nodes, $|E^\tau|$ the number of timestamped edges, and $\delta$ the maximum waiting time used to construct the augmented event graph.}

\label{tab:dataset_stats}
\end{table}

\paragraph{Details on data sets} Detailed statistics on the empirical temporal graphs used in our experiments can be found in \cref{tab:dataset_stats}.
All data sets are properly credited in the table and publicly available online. The SocioPatterns data \texttt{sp-workplace} and \texttt{sp-hospital} are made available under a CC-0 1.0 license \footnote{\url{https://creativecommons.org/publicdomain/zero/1.0/}}.

\paragraph{Details on Shuffling Method for Empirical Temporal Graphs}

Starting from a given empirical temporal graph, we implement three different methods that randomly shuffle the timestamps of the edges in ways that preserve certain features but destroy others.
Method A shifts the timestamp of each edge by a random value between~$0$ and~$\delta/2$.
This mostly preserves the structure of time-respecting paths, but not entirely.
Method B identifies the connected components of the temporal event graph and reassigns the timestamps such that the relative values within each component are preserved but the temporal order of the components is changed, while ensuring a temporal gap of more than $\delta$ between consecutive components.
Because there are no time-respecting paths that cross different components, this means that the causal topology remains unchanged even as the global ordering of the events changes drastically.
Method C shuffles all timestamps randomly, which preserves the frequency of all edges but is highly likely to change time-respecting paths.
Note that all three methods leave the time-aggregated static graph unchanged.
With these methods, we build four graph classes.
To build Class 1a, Method A is applied to 100 different copies of the empirical temporal graph.
Class 2a is then constructed by applying Method C to every graph in Class 1a.
Finally, Classes 1b and 2b are built by applying Method B to every graph in Class 1a and 2a, respectively.
For the purpose of our supervised binary classification task, Classes 1a and 1b are grouped together into Class 1, and Classes 2a and 2b into Class 2.
The task is to distinguish Class 1, which mostly preserves the time-respecting paths of the original temporal graph, from Class 2, which does not.
Because Method B can introduce drastic changes in the ordering of the timestamps without affecting the time-respecting paths, this task requires distinguishing between timestamp differences that affect time-respecting paths and those that do not.

\paragraph{Details on Synthetic Cluster Model} We adopt the stochastic model proposed by \citet{Scholtes2014_natcomm}.
This model is based on a static graph with two strong communities, each consisting of a $k$-regular random graph with $k=3$ and $n_1=n_2=10$ nodes that are interconnected by two edges.
We randomly generate temporal graphs by simulating $500$ second-order random walks of length two.
Traversed edges are assigned consecutive timestamps $t$ and $t+1$ within each walk, with an additional time gap of $t+2$ separating different walks.
This leads to $250$ temporal graphs with $1000$ timestamped edges, each containing $500$ time-respecting paths of length two (for $\delta=1$).
A parameter $\sigma \in (-1,1)$ allows to tune random walk transition probabilities such that (i) for $\sigma<0$ time-respecting paths between \emph{different} communities are underrepresented compared to a shuffled temporal graph, (ii) for $\sigma>0$ time-respecting paths connecting different communities are \emph{overrepresented}, and (iii) for all $\sigma$ time-aggregated static graphs are identical (see  \citep{Scholtes2014_natcomm}).
This model generates random temporal graphs that share the same time-aggregated static topology, but whose causal topology differs in terms of how well nodes in different communities are connected via time-respecting paths.
We use this model to generate $250$ temporal graphs for different $\sigma$.
Given two parameter choices $\sigma_1$ and $\sigma_2$, we assign graphs generated with $\sigma_1$ to one class and those generated with $\sigma_2$ to the other class.

\paragraph{Implementation Details}
We used the open-source temporal graph learning library \texttt{pathpyG} to implement the random walk-based models for the empirical and synthetic temporal graphs explained above.
We then used pathpyG to generate augmented event graphs for all temporal graphs generated by the two models, with the dataset-specific values of $\delta$ reported in Table~\ref{tab:dataset_stats}. For the synthetic cluster model, we use $\delta = 1$.
We used the compressed augmented event graph for the experiments on the synthetic cluster model, but the (uncompressed) augmented event graph for the experiments on the empirical temporal graphs.

We implemented the TGNNs based on their respective reference implementations made available by the authors, specifically:
TGAT \cite{xu2020inductive}\footnote{\scriptsize \url{https://github.com/StatsDLMathsRecomSys/Inductive-representation-learning-on-temporal-graphs}}, TGN \cite{rossi2020temporalgraphnetworksdeep}\footnote{\scriptsize\url{https://github.com/twitter-research/tgn}}, CAW \cite{wang2021inductive} \footnote{\scriptsize\url{https://github.com/snap-stanford/CAW}}, PINT \cite{souza2022provably}\footnote{\scriptsize\url{https://github.com/AaltoPML/PINT/}}. These models are originally trained in a self-supervised manner for link prediction. We adopt this training procedure to obtain temporal node embeddings and subsequently use the learned embeddings as input to a downstream multi-layer perceptron (MLP) for the graph classification task. For the self-supervised tasks we use the default model parameters as given in the implementations.

For message passing in the augmented event graph, we use three convolutional layers (with layer widths being hyperparameters) using pyG's implementation of Dir-GNN (using the \texttt{GraphConv} layer) proposed by \citet{Rossi23}.
We use an \texttt{add} pooling layer and two dense linear classification layers, where layer widths are hyperparameters.

For all experiments, we address a balanced, binary temporal graph classification task, i.e., given $n$ temporal graphs $\temporalGraph_i$ ($i=1, \ldots, n$), we want to learn a classifier $C\colon \{ \temporalGraph_i \} \rightarrow \{0, 1\}$, with ground truth classes assigned as described above.
Importantly, for a $\temporalGraph_i$ we predict a \emph{single class} rather than multiple classes for different times $t$ in the evolution of $\temporalGraph_i$.

For the graph classification MLPs from the TGNNs and for GAT and our event graph neural network (EGNN) , we use a single output with sigmoid activation function in the final layer and train the model using binary cross entropy loss.

We use a one-hot encoding (OHE) of node labels in the augmented event graph (cf.\ $\nodeLabel$ in \Cref{def:aug-ev}) as node features for EGNN and an OHE of the nodes for GAT. In the other MLPs the node features are obtained by the self supervised link prediction of the TGNNs.

Since TGNNs produce time-dependent node representations, we consider multiple strategies to obtain graph-level inputs for the downstream MLP classifier. Specifically, we evaluate whether to use (i) all temporal node embeddings, (ii) only the last temporal embedding per node, or (iii) the mean over all temporal embeddings of each node. These node-level representations are then aggregated at graph level either via max pooling, mean pooling, or by directly concatenating node embeddings without additional pooling. The optimal combination of temporal selection and graph-level aggregation strategy is determined through hyperparameter tuning and reported in Tables~\ref{tab:hyperparameters}-\ref{tab:hyperparameters:tgn}.

For all experiments we ran the Adam optimizer, with 100 epochs with a 80/20 training/test split.
We performed a grid search to tune hyperparameters (learning rate, width of GNN and dense classification layers and the pooling). The optimal hyperparameters used to obtain the results are reported in Tables~\ref{tab:hyperparameters}-\ref{tab:hyperparameters:tgn}.

\begin{table*}
\centering
\begin{tabular}{|c|c|c|}
\hline
 & Our model & GAT\\
\hline
batch size & 200  & 50 \\
weight\_decay &0.0001  & 0 \\
learning rate & 0.001  & 0.001  \\
DirGNN layers & 32 $\rightarrow$ 16 $\rightarrow$ 8 & --\\
GatConv layers & --  & 8 $\rightarrow$ 64 $\rightarrow$ 64\\
dense layers & 8 $\rightarrow$ 4 $\rightarrow$ 1 & 64 $\rightarrow$ 1\\
pooling &  global add & global mean\\
\hline
node feature dim &  2 (OHE of node labels) & n (OHE of nodes)\\
tunable parameters &  3777 & 8n+5065\\
\hline
\end{tabular}
\medskip
\caption{Optimal hyperparameter values and number of tunable model parameters (i.e., model size) for our model and GAT.}
\label{tab:hyperparameters}
\end{table*}

\begin{table*}
    \centering
     \scalebox{0.8}{
    \begin{tabular}{|c|c|c|c|c|c|}
    \hline
         & ants-1-1&ants-1-2 & workplace& hospital& eu-email-dept2  \\
         \hline
        batch size & 1&1 &1  &1 &1 \\
        weight decay &0 &0 &0 &0  &0 \\
        learning rate &0.001 &0.001 &0.01 &0.001 & 0.001\\
        layers &64 $\rightarrow$ 128 $\rightarrow$ 1 &64 $\rightarrow$ 128$\rightarrow$ 64  $\rightarrow$ 1 &64 $\rightarrow$ 128 $\rightarrow$ 1&64 $\rightarrow$ 128 $\rightarrow$ 1 &64 $\rightarrow$ 128 $\rightarrow$ 64 $\rightarrow$ 1  \\
        aggr.\ node embeddings &mean  &- &-&mean &- \\
        pooling&- &- &-  &- &max \\
        \hline
        node feature dim &64&64&64&64&64 \\
        tunable parameters &826565 & 839494 &827589 &822085 &842310\\
        \hline
    \end{tabular}}
    \caption{Optimal hyperparameter values and number of tunable model parameters for CAW.}
    \label{tab:hyperparameters:caw}
\end{table*}

\begin{table*}
    \centering
     \scalebox{0.8}{
    \begin{tabular}{|c|c|c|c|c|c|}
    \hline
         & ants-1-1&ants-1-2 & workplace& hospital& eu-email-dept2  \\
         \hline
        batch size & 1&1 &1  &1 &1 \\
        weight decay &0 &0 &0 &0  &0 \\
        learning rate &0.001 &0.001 &0.01 &0.001 & 0.001\\
        layers &64 $\rightarrow$ 128 $\rightarrow$ 1 &64 $\rightarrow$ 128$\rightarrow$ 64  $\rightarrow$ 1 &64 $\rightarrow$ 128 $\rightarrow$ 1&64 $\rightarrow$ 128 $\rightarrow$ 1 &64 $\rightarrow$ 128 $\rightarrow$ 64 $\rightarrow$ 1  \\
        aggr.\ node embeddings &mean  &- &mean&- &- \\
        pooling&- &- &max  &- &- \\
        \hline
        node feature dim &64&64&64&64&64 \\
        tunable parameters &16336 & 28155 & 16472 &15741 &28529\\
        \hline
    \end{tabular}}
    \caption{Optimal hyperparameter values and number of tunable model parameters for PINT.}
    \label{tab:hyperparameters:pint}
\end{table*}

\begin{table*}
    \centering
     \scalebox{0.8}{
    \begin{tabular}{|c|c|c|c|c|c|}
    \hline
         & ants-1-1&ants-1-2 & workplace& hospital& eu-email-dept2  \\
         \hline
        batch size & 1&1 &1  &1 &1 \\
        weight decay &0 &0 &0 &0  &0 \\
        learning rate &0.01 &0.001 &0.001 &0.01 & 0.001\\
        layers &64 $\rightarrow$ 128 $\rightarrow$ 1 &64 $\rightarrow$ 128$\rightarrow$ 1 &64 $\rightarrow$ 128 $\rightarrow$ 1&64 $\rightarrow$ 128 $\rightarrow$ 1 &64 $\rightarrow$ 128 $\rightarrow$ 1  \\
        aggregation node embeddings &mean  &mean &last &mean&mean \\
        pooling&- &max &max  &max &max \\
        \hline
        node feature dim &64&64&64&64&64 \\
        tunable parameters &402243 & 402883 & 402755 &400003 &404291\\
        \hline
    \end{tabular}}
    \caption{Optimal hyperparameter values and number of tunable model parameters for TGAT.}
    \label{tab:hyperparameters:tgat}
\end{table*}

\begin{table*}
    \centering
     \scalebox{0.8}{
    \begin{tabular}{|c|c|c|c|c|c|}
    \hline
         & ants-1-1&ants-1-2 & workplace& hospital& eu-email-dept2  \\
         \hline
        batch size & 1&1 &1  &1 &1 \\
        weight decay &0 &0 &0 &0  &0 \\
        learning rate &0.001 &0.001 &0.001 &0.001 & 0.001\\
        layers &64 $\rightarrow$ 128 $\rightarrow$ 1 &64 $\rightarrow$ 128$\rightarrow$ 1 &64 $\rightarrow$ 128 $\rightarrow$ 1&64 $\rightarrow$ 128 $\rightarrow$ 1 &64 $\rightarrow$ 128 $\rightarrow$ 1  \\
        aggregation node embeddings &-  &- &- &mean&- \\
        pooling&max &mean &mean& - &max \\
        \hline
        node feature dim &64&64&64&64&64 \\
        tunable parameters &10991 & 11161 & 11127 &10396 &11535\\
        \hline
    \end{tabular}}
    \caption{Optimal hyperparameter values and number of tunable model parameters for TGN.}
    \label{tab:hyperparameters:tgn}
\end{table*}

\begin{table}
\centering
\begin{tabular}{|c|c|}
\hline
 Library/Software & Version \\
\hline
CUDA & cu121 \\
\texttt{torch}  & 2.4.1+cu121 \\
\texttt{torch\_cluster} & 1.6.3+pt24cu121 \\
\texttt{torch\_scatter} & 2.1.2+pt24cu121 \\
\texttt{torch\_sparse} & 0.6.18+pt24cu121 \\

\texttt{torch\_geometric} & 2.5.1\\
\texttt{pyg-lib} & 0.4.0+pt24cu121 \\
\texttt{pathpyG} & 0.2.0  \\
\hline
\end{tabular}
\medskip
\caption{Version of key dependencies used in the implementation of our experimental evaluation
\label{tab:libraries}}
\end{table}

More detailed information on library versions, parameter number of our models, as well as the computational resources used during training are included in Tables \Cref{tab:hyperparameters} --\Cref{tab:libraries} .

We ran our experiments on a container-based (Singularity) HPC environment with 4.512 CPU cores and 160 GPUs (122 x L40, 24 x L40s, 16 x H100).
For our experiments we used a total of less than 30 GPU hours for training and evaluation.
Upon acceptance of our manuscript, we will make the code of our experiments as well as a container description available to ensure the reproducibility of our results.

\section{Additional Results}
\label{sec:addresults}

In this section, we include additional results for the temporal graph classification experiment in synthetic temporal graphs for parameter pairs $\sigma_1, \sigma_2$, where we assign all temporal graphs generated for $\sigma_1$ to one class, while all temporal graphs generated for $\sigma_2$ are assigned to the other class.
In \Cref{fig:classification:stdev} we show the standard deviation of classification accuracies of our TGNN model, fitting the mean accuracies of our model reported in \Cref{fig:classificationRQ3} in \Cref{sec:results}.

\begin{figure}
    \centering
\includegraphics[width=0.45\linewidth]{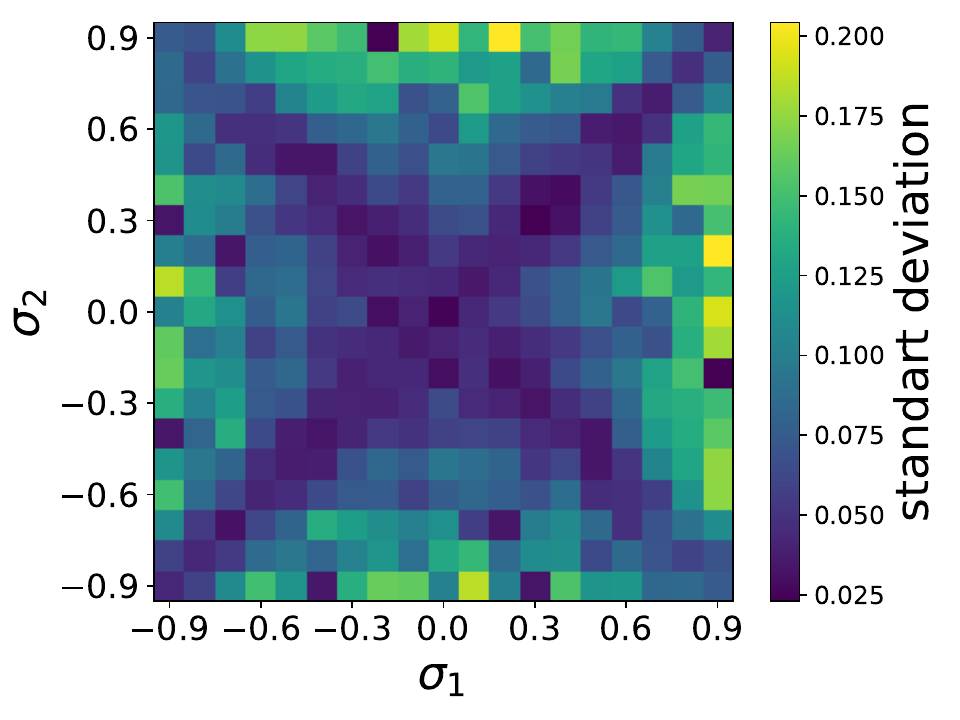}
\caption{Standard deviation of accuracies of our TGNN model for classification of temporal graphs with $\sigma_1$ vs. $\sigma_2$ for all pairs of $\sigma_1, \sigma_2$ (associated with \Cref{fig:classificationRQ3}).\label{fig:classification:stdev}}
\end{figure}

\Cref{fig:classification:gat} shows the mean accuracy of GAT applied to the time-concatenated static graph across 20 runs (mean standard deviation $0.048$).
We find that this model is not able to reliably classify temporal graphs for any combination of parameters $\sigma_1, \sigma_2$.

\begin{figure}
     \centering
   \begin{subfigure}[t]{0.45\textwidth}
   \centering
    \includegraphics[width=\linewidth]{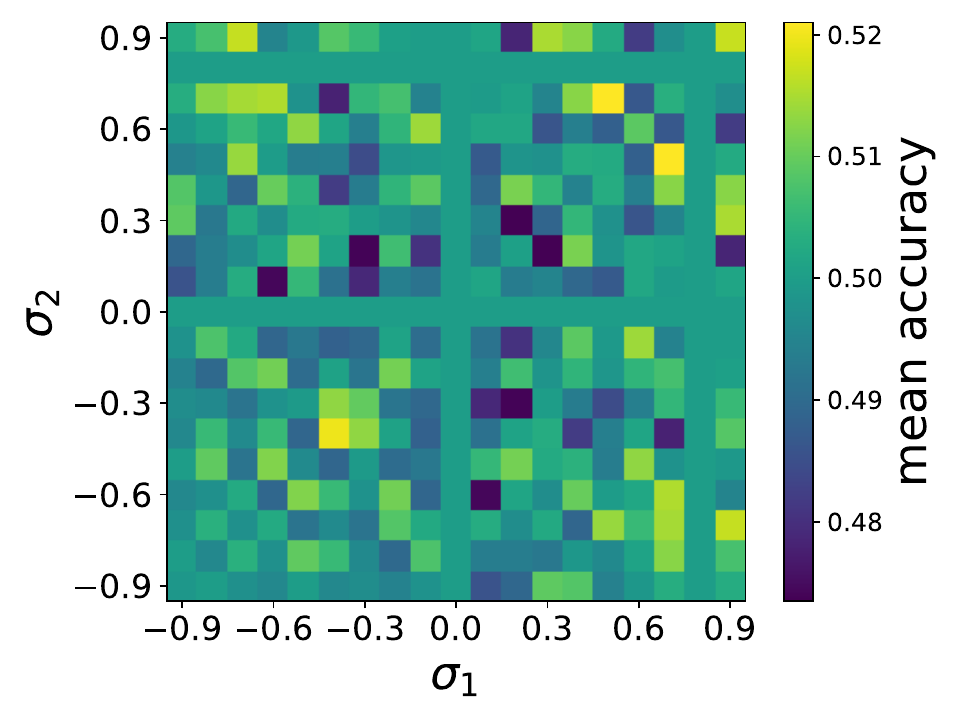}

    \label{fig:heatmap_gat}
    \end{subfigure}
    \begin{subfigure}[t]{0.45\textwidth}
    \centering

   \end{subfigure}
\caption{Mean accuracies of a GAT model on the time-concatenated static graph for classification of temporal graphs with $\sigma_1$ vs. $\sigma_2$ for all pairs of $\sigma_1, \sigma_2$. The mean standard deviation across all runs is $0.048$. \label{fig:classification:gat}}
\end{figure}

\section{Ablation Study}
\label{app:ablation}

\Cref{tab:ablation} shows results of an ablation study, where we selectively remove those aspects of our message passing architecture, which -- based on our theoretical analysis in \Cref{sec:messagepassing} -- we predict to be necessary to distinguish non-isomorphic temporal graphs.
We first remove node labels that indicate whether nodes in the augmented event graph represent nodes or timestamped edges in the original temporal graph (cf.\ $\nodeLabel$ in \Cref{def:aug-ev}).
We then remove the bidirectional message passing discussed in \Cref{sec:messagepassing}.
We finally ignore edge weights in the compressed event graph, as introduced in \Cref{app:compress}.
The results in \Cref{tab:ablation} demonstrate that the message passing architecture proposed in \Cref{sec:messagepassing}, which integrates all of those components, achieves the best performance (top-most row) compared to other approaches (other rows).

\begin{table}[]
    \centering
    \begin{tabular}{lc}
\toprule
Experiment & Accuracy \\
\midrule
Compressed augmented event graph & 0.95 $\pm$ 0.03 \\
\midrule 
No node labels & 0.66 $\pm$ 0.20 \\
Message passing only in one direction & 0.63 $\pm$ 0.18 \\
No edge weights & 0.81 $\pm$ 0.22 \\
\bottomrule
\end{tabular}

    \medskip
    \caption{Results of ablation study for the classification task in the synthetic cluster model, using $\sigma_1=0$ and $\sigma_2=0.9$.}
    \label{tab:ablation}
\end{table}

\end{document}